\newcolumntype{L}[1]{>{\raggedright\let\newline\\\arraybackslash\hspace{0pt}}m{#1}}
\newcolumntype{C}[1]{>{\centering\let\newline\\\arraybackslash\hspace{0pt}}m{#1}}
\newcolumntype{R}[1]{>{\raggedleft\let\newline\\\arraybackslash\hspace{0pt}}m{#1}}
\let\MYcaption\@makecaption
\let\@makecaption\MYcaption
\let\oldgls\gls
\let\oldglspl\glspl
\newcommand\fussy@ifnextchar[3]{%
	\let\reserved@d=#1%
	\def\reserved@a{#2}%
	\def\reserved@b{#3}%
	\futurelet\@let@token\fussy@ifnch}
\def\fussy@ifnch{%
	\ifx\@let@token\reserved@d
		\let\reserved@c\reserved@a
	\else
		\let\reserved@c\reserved@b
	\fi
	\reserved@c}
\renewcommand{\gls}[1]{%
\oldgls{#1}\fussy@ifnextchar.{\@checkperiod}{\@}}
\renewcommand{\glspl}[1]{%
\oldglspl{#1}\fussy@ifnextchar.{\@checkperiod}{\@}}
\newcommand{\@checkperiod}[1]{%
	\ifnum\sfcode`\.=\spacefactor\else#1\fi
}
\newacronym{wrt}{w.r.t.}{with respect to}
\newacronym{RHS}{R.H.S.}{right-hand side}
\newacronym{LHS}{L.H.S.}{left-hand side}
\newacronym{iid}{i.i.d.}{independent and identically distributed}
\newacronym{SOTA}{SOTA}{state-of-the-art}
\let\saved@bibitem\@bibitem\makeatother
\let\@bibitem\saved@bibitem\makeatother
\crefname{equation}{}{}
\Crefname{equation}{}{}
\crefname{claim}{claim}{claims}
\crefname{step}{step}{steps}
\crefname{line}{line}{lines}
\crefname{condition}{condition}{conditions}
\crefname{dmath}{}{}
\crefname{dseries}{}{}
\crefname{dgroup}{}{}
\crefname{Problem}{Problem}{Problems}
\crefname{Theorem}{Theorem}{Theorems}
\crefname{Corollary}{Corollary}{Corollaries}
\crefname{Proposition}{Proposition}{Propositions}
\crefname{Lemma}{Lemma}{Lemmas}
\crefname{Definition}{Definition}{Definitions}
\crefname{Example}{Example}{Examples}
\crefname{Assumption}{Assumption}{Assumptions}
\crefname{Remark}{Remark}{Remarks}
\crefname{Rem}{Remark}{Remarks}
\crefname{remarks}{Remarks}{Remarks}
\crefname{Appendix}{Appendix}{Appendices}
\crefname{Supplement}{Supplement}{Supplements}
\crefname{Exercise}{Exercise}{Exercises}
\crefname{Theorem_A}{Theorem}{Theorems}
\crefname{Corollary_A}{Corollary}{Corollaries}
\crefname{Proposition_A}{Proposition}{Propositions}
\crefname{Lemma_A}{Lemma}{Lemmas}
\crefname{Definition_A}{Definition}{Definitions}
		\let\Cref\crtCref
		\let\cref\crtcref
\def\cleartheorem#1{%
    \expandafter\let\csname#1\endcsname\relax
    \expandafter\let\csname c@#1\endcsname\relax
}
\def\clearthms#1{ \@for\tname:=#1\do{\cleartheorem\tname} }
		\newtheorem{Theorem}{Theorem}
		\newtheorem{Corollary}{Corollary}
		\newtheorem{Proposition}{Proposition}
		\newtheorem{Lemma}{Lemma}
		\newtheorem{Theorem}{Theorem}
		\newtheorem{Corollary}[Theorem]{Corollary}
	\newtheorem{Definition}{Definition}
    \newtheorem{Problem}{Problem}
\theoremstyle{remark}
\theoremstyle{plain}
\newcommand{\qednew}{\nobreak \ifvmode \relax \else
		\ifdim\lastskip<1.5em \hskip-\lastskip
			\hskip1.5em plus0em minus0.5em \fi \nobreak
		\vrule height0.75em width0.5em depth0.25em\fi}
\NewDocumentCommand{\movedownsub}{e{^_}}{%
	\IfNoValueTF{#1}{%
		\IfNoValueF{#2}{^{}}
	}{%
		^{#1}
	}%
	\IfNoValueF{#2}{_{#2}}
}
\let\latexchi\chi
\RenewDocumentCommand{\chi}{}{\latexchi\movedownsub}
\newcommand{\calC}{\mathcal{C}}
\newcommand{\calE}{\mathcal{E}}
\newcommand{\calG}{\mathcal{G}}
\newcommand{\calN}{\mathcal{N}}
\newcommand{\calO}{\mathcal{O}}
\newcommand{\calP}{\mathcal{P}}
\newcommand{\calT}{\mathcal{T}}
\newcommand{\calU}{\mathcal{U}}
\newcommand{\calV}{\mathcal{V}}
\newcommand{\boldf}{\mathbf{f}}
\newcommand{\bg}{\mathbf{g}}
\newcommand{\bh}{\mathbf{h}}
\newcommand{\bx}{\mathbf{x}}
\newcommand{\frakR}{\mathfrak{R}}
\DeclareSymbolFont{bsfletters}{OT1}{cmss}{bx}{n}
\DeclareSymbolFont{ssfletters}{OT1}{cmss}{m}{n}
\DeclareMathSymbol{\bsfGamma}{0}{bsfletters}{'000}
\DeclareMathSymbol{\ssfGamma}{0}{ssfletters}{'000}
\DeclareMathSymbol{\bsfDelta}{0}{bsfletters}{'001}
\DeclareMathSymbol{\ssfDelta}{0}{ssfletters}{'001}
\DeclareMathSymbol{\bsfTheta}{0}{bsfletters}{'002}
\DeclareMathSymbol{\ssfTheta}{0}{ssfletters}{'002}
\DeclareMathSymbol{\bsfLambda}{0}{bsfletters}{'003}
\DeclareMathSymbol{\ssfLambda}{0}{ssfletters}{'003}
\DeclareMathSymbol{\bsfXi}{0}{bsfletters}{'004}
\DeclareMathSymbol{\ssfXi}{0}{ssfletters}{'004}
\DeclareMathSymbol{\bsfPi}{0}{bsfletters}{'005}
\DeclareMathSymbol{\ssfPi}{0}{ssfletters}{'005}
\DeclareMathSymbol{\bsfSigma}{0}{bsfletters}{'006}
\DeclareMathSymbol{\ssfSigma}{0}{ssfletters}{'006}
\DeclareMathSymbol{\bsfUpsilon}{0}{bsfletters}{'007}
\DeclareMathSymbol{\ssfUpsilon}{0}{ssfletters}{'007}
\DeclareMathSymbol{\bsfPhi}{0}{bsfletters}{'010}
\DeclareMathSymbol{\ssfPhi}{0}{ssfletters}{'010}
\DeclareMathSymbol{\bsfPsi}{0}{bsfletters}{'011}
\DeclareMathSymbol{\ssfPsi}{0}{ssfletters}{'011}
\DeclareMathSymbol{\bsfOmega}{0}{bsfletters}{'012}
\DeclareMathSymbol{\ssfOmega}{0}{ssfletters}{'012}
\newcommand{\bmu}{\bm{\mu}}
\newcommand*\rel@kern[1]{\kern#1\dimexpr\macc@kerna}
\newcommand*\widebar[1]{%
  \begingroup
  \def\mathaccent##1##2{%
    \rel@kern{0.8}%
    \overline{\rel@kern{-0.8}\macc@nucleus\rel@kern{0.2}}%
    \rel@kern{-0.2}%
  }%
  \macc@depth\@ne
  \let\math@bgroup\@empty \let\math@egroup\macc@set@skewchar
  \mathsurround\z@ \frozen@everymath{\mathgroup\macc@group\relax}%
  \macc@set@skewchar\relax
  \let\mathaccentV\macc@nested@a
  \macc@nested@a\relax111{#1}%
  \endgroup
}
\DeclareMathOperator*{\argmin}{arg\,min}
\newcommand{\ifbcdot}[1]{\ifblank{#1}{\cdot}{#1}}
\DeclarePairedDelimiterX\abs[1]{\lvert}{\rvert}{\ifbcdot{#1}}
\DeclarePairedDelimiterX\parens[1]{(}{)}{\ifbcdot{#1}}
\DeclarePairedDelimiterX\brk[1]{[}{]}{\ifbcdot{#1}}
\DeclarePairedDelimiterX\braces[1]{\{}{\}}{\ifbcdot{#1}}
\DeclarePairedDelimiterX\angles[1]{\langle}{\rangle}{\ifblank{#1}{\cdot,\cdot}{#1}}
\DeclarePairedDelimiterX\ip[2]{\langle}{\rangle}{\ifbcdot{#1},\ifbcdot{#2}}
\DeclarePairedDelimiterX\norm[1]{\lVert}{\rVert}{\ifbcdot{#1}}
\DeclarePairedDelimiterX\ceil[1]{\lceil}{\rceil}{\ifbcdot{#1}}
\DeclarePairedDelimiterX\floor[1]{\lfloor}{\rfloor}{\ifbcdot{#1}}
\DeclareFontFamily{U}{matha}{\hyphenchar\font45}
\DeclareFontShape{U}{matha}{m}{n}{
      <5> <6> <7> <8> <9> <10> gen * matha
      <10.95> matha10 <12> <14.4> <17.28> <20.74> <24.88> matha12
      }{}
\DeclareSymbolFont{matha}{U}{matha}{m}{n}
\DeclareFontFamily{U}{mathx}{\hyphenchar\font45}
\DeclareFontShape{U}{mathx}{m}{n}{
      <5> <6> <7> <8> <9> <10>
      <10.95> <12> <14.4> <17.28> <20.74> <24.88>
      mathx10
      }{}
\DeclareSymbolFont{mathx}{U}{mathx}{m}{n}
\DeclareMathDelimiter{\vvvert}{0}{matha}{"7E}{mathx}{"17}
\DeclarePairedDelimiterX\vertiii[1]{\vvvert}{\vvvert}{\ifbcdot{#1}}
\DeclarePairedDelimiterXPP\trace[1]{\operatorname{Tr}}{(}{)}{}{\ifbcdot{#1}} 
\DeclarePairedDelimiterXPP\col[1]{\operatorname{col}}{\{}{\}}{}{\ifbcdot{#1}} 
\DeclarePairedDelimiterXPP\row[1]{\operatorname{row}}{\{}{\}}{}{\ifbcdot{#1}} 
\DeclarePairedDelimiterXPP\erf[1]{\operatorname{erf}}{(}{)}{}{\ifbcdot{#1}}
\DeclarePairedDelimiterXPP\erfc[1]{\operatorname{erfc}}{(}{)}{}{\ifbcdot{#1}}
\DeclarePairedDelimiterXPP\KLD[2]{D}{(}{)}{}{\ifbcdot{#1}\, \delimsize\|\, \ifbcdot{#2}} 
\DeclarePairedDelimiterXPP\op[2]{\operatorname{#1}}{(}{)}{}{#2} 
\newcommand{\T}{^{\mkern-1.5mu\mathop\intercal}}
\newcommand{\ud}{\,\mathrm{d}} 
\DeclarePairedDelimiterXPP\indicate[1]{{\bf 1}}{\{}{\}}{}{\ifbcdot{#1}}
\NewDocumentCommand\ofrac{s m}{%
	\IfBooleanTF#1%
	{\dfrac{1}{#2}}%
	{\frac{1}{#2}}%
}
\NewDocumentCommand\ddfrac{s m m}{%
	\IfBooleanTF#1%
	{\dfrac{\mathrm{d} {#2}}{\mathrm{d} {#3}}}%
	{\frac{\mathrm{d} {#2}}{\mathrm{d} {#3}}}%
}
\NewDocumentCommand\ppfrac{s m m}{%
	\IfBooleanTF#1%
	{\dfrac{\partial {#2}}{\partial {#3}}}%
	{\frac{\partial {#2}}{\partial {#3}}}%
}
\providecommand\given{}
\DeclarePairedDelimiterX\Set[2]\{\}{%
\renewcommand\given{\SetSymbol[\delimsize]{#1}}
#2
}
\DeclarePairedDelimiterX\Setc[1]\{\}{%
\renewcommand\given{\SetSymbol{:}}
#1
}
\NewDocumentCommand\set{s o m}{%
	\IfBooleanTF#1%
	{\IfValueTF{#2}{\Set*{#2}{#3}}{\Setc*{#3}}}%
	{\IfValueTF{#2}{\Set{#2}{#3}}{\Setc{#3}}}%
}
\NewDocumentCommand{\evalat}{ s O{\big} m e{_^} }{%
\IfBooleanTF{#1}%
{\left. #3 \right|}{#3#2|}%
\IfValueT{#4}{_{#4}}%
\IfValueT{#5}{^{#5}}%
}
\providecommand\given{}
\DeclarePairedDelimiterXPP\cprob[1]{}(){}{
\renewcommand\given{\nonscript\,\delimsize\vert\allowbreak\nonscript\,\mathopen{}}%
\DeclarePairedDelimiterXPP\cexp[1]{}[]{}{
\renewcommand\given{\nonscript\,\delimsize\vert\allowbreak\nonscript\,\mathopen{}}%
#1%
}
\DeclareDocumentCommand \P { s e{_^} d() g } {%
	\mathbb{P}%
	\IfBooleanTF{#1}%
		{
			\IfValueT{#2}{_{#2}}%
			\IfValueT{#3}{^{#3}}%
			\IfValueTF{#5}{\cprob{#4 \given #5}}{\IfValueT{#4}{\cprob{#4}}}%
		}%
		{
			\IfValueT{#2}{_{#2}}%
			\IfValueT{#3}{^{#3}}%
			\IfValueTF{#5}{\cprob*{#4 \given #5}}{\IfValueT{#4}{\cprob*{#4}}}%
		}%
}
\DeclareDocumentCommand \E { s e{_^} o g } {%
	\mathbb{E}%
	\IfBooleanTF{#1}%
		{
			\IfValueT{#2}{_{#2}}%
			\IfValueT{#3}{^{#3}}%
			\IfValueTF{#5}{\cexp{#4 \given #5}}{\IfValueT{#4}{\cexp{#4}}}%
		}%
		{
			\IfValueT{#2}{_{#2}}%
			\IfValueT{#3}{^{#3}}%
			\IfValueTF{#5}{\cexp*{#4 \given #5}}{\IfValueT{#4}{\cexp*{#4}}}%
		}%
}
\NewDocumentCommand \dist {m o o} {%
\mathrm{#1}\left(%
	\IfValueT{#3}{%
		\tl_if_blank:nTF{ #3 }{\cdot\, \middle|\, }{#3\, \middle|\, }%
	}
	\IfValueT{#2}{#2}%
\right)%
}
\NewDocumentCommand {\cbrace} {t+ D[]{black} D(){\widthof{#5}} m m } {%
	\begingroup%
		\color{#2}
		\IfBooleanTF{#1}{%
			\overbrace{#4}^%
		}{
			\underbrace{#4}_%
		}%
		{\parbox[c]{#3}{\centering\footnotesize{#5}}}%
	\endgroup%
}
\let\oldforall\forall
\renewcommand{\forall}{\oldforall \, }
\let\oldexist\exists
\renewcommand{\exists}{\oldexist \, }
\newcommand{\figref}[1]{Fig.~\ref{#1}}
\DeclareDocumentCommand{\includeCroppedPdf}{ o O{./Figures/} m }{
	\IfFileExists{#2#3-crop.pdf}{}{%
		\immediate\write18{pdfcrop #2#3.pdf #2#3-crop.pdf}}%
	\includegraphics[#1]{#2#3-crop.pdf}
}
\newcommand*{\addFileDependency}[1]{
  \typeout{(#1)}
  \@addtofilelist{#1}
  \IfFileExists{#1}{}{\typeout{No file #1.}}
}
\definecolor{gray90}{gray}{0.9}
	\newcommand{\msout}[1]{\text{\color{green} \sout{\ensuremath{#1}}}}
	\newcommand{\del}[1]{{\color{green}\ifmmode \msout{#1}\else\sout{#1}\fi}}
	\newcommand{\msout}[1]{#1}
	\newcommand{\del}[1]{#1}
\newcommand{\hhide}[1]{}
	\def\@testdef #1#2#3{%
		\def\reserved@a{#3}\expandafter \ifx \csname #1@#2\endcsname
			\reserved@a  \else
			\typeout{^^Jlabel #2 changed:^^J%
				\meaning\reserved@a^^J%
				\expandafter\meaning\csname #1@#2\endcsname^^J}%
			\@tempswatrue \fi}
\newtheorem{Method}{Method}
\icmltitlerunning{Leveraging Label Non-Uniformity for NC in GNNs}
\begin{document}

\twocolumn[
\icmltitle{Leveraging Label Non-Uniformity for Node Classification in Graph Neural Networks}




\begin{icmlauthorlist}
\icmlauthor{Feng Ji}{ntu}
\icmlauthor{See Hian Lee}{ntu}
\icmlauthor{Hanyang Meng}{jn}
\icmlauthor{Kai Zhao}{ntu}
\icmlauthor{Jielong Yang}{jn}
\icmlauthor{Wee Peng Tay}{ntu}
\end{icmlauthorlist}

\icmlaffiliation{ntu}{School of Electrical and Electronic Engineering, Nanyang Technological University, Singapore}
\icmlaffiliation{jn}{School of Internet of Things Engineering, Jiangnan University, Wuxi, Jiangsu, China}

\icmlcorrespondingauthor{J. Yang}{jyang022@e.ntu.edu.sg}
\icmlcorrespondingauthor{W. P. Tay}{wptay@ntu.edu.sg}

\icmlkeywords{Machine Learning, ICML}

\vskip 0.3in
]



\printAffiliationsAndNotice{}  

\begin{abstract}
In node classification using graph neural networks (GNNs), a typical model generates logits for different class labels at each node. A softmax layer often outputs a label prediction based on the largest logit. We demonstrate that it is possible to infer hidden graph structural information from the dataset using these logits. We introduce the key notion of label non-uniformity, which is derived from the Wasserstein distance between the softmax distribution of the logits and the uniform distribution. We demonstrate that nodes with small label non-uniformity are harder to classify correctly. We theoretically analyze how the label non-uniformity varies across the graph, which provides insights into boosting the model performance: increasing training samples with high non-uniformity or dropping edges to reduce the maximal cut size of the node set of small non-uniformity. These mechanisms can be easily added to a base GNN model. Experimental results demonstrate that our approach improves the performance of many benchmark base models.   
\end{abstract}

\section{Introduction}
Graph neural networks (GNNs) are neural networks that learn from graph-structured data \citep{Def16}. A problem of interest is the node classification problem. In a graph, one is given the labels of a subset of nodes and must predict the labels of remaining nodes using features associated with each node. Through many years of development, numerous GNN models have been proposed to tackle the node classification problem. Though many more recent approaches (e.g., \citet{Zha20, Zhu20, Yan21, Kang21, Rus22, Song22, Zhao23}) have sophisticated mechanisms, they are influenced by earlier models such as GCN \citep{Def16} and GAT \citep{Vel18}. Quite a few important features of these primitive models are inherited by their up-to-date counterparts.

We briefly recall a few features of a model such as GCN that are most relevant. The basic idea of a GNN model is to generate an embedding of nodes in an appropriate geodesic metric space such as Euclidean space. Nodes are subsequently grouped into different classes using a union of hyperplanes. Such a strategy is realized by applying a message-passing algorithm. In each iteration of the algorithm, each node updates its feature vector by using a weighted average of the feature vectors from its neighbors. The generated embedding is then input to fully connected layers that output logits, which after normalization via softmax, are interpreted as probability weights of the label classes. The predicted label of each node is the class with the largest probability. Geometric information (e.g., hyperbolicity \citet{Gul19, Zhu20, Zhay21}) on the graph plays a fundamental role in the process as we need to utilize the local neighborhood of each node. 

On the other hand, there is hidden graph structural information crucial to the success of the classification task. For example, we may be interested to know what are the boundary nodes (i.e., those nodes that have neighbors with different classes) between different label classes. In this work, we aim to retrieve such graph structural information using predicted logits from a model such as GCN. Therefore, in contrast to the procedure described in the previous paragraph, information retrieval is in the reverse direction. 

The main tool we use is the notion of non-uniformity of the probability weights of the label classes derived from logits, inspired by the idea of distributional graph signals \citep{Jif23, Ji23}. It measures the extent to which the probability distribution is not uniform.
The insight is that for a node near class boundaries, during training, we have mixed contributions 
from different label classes as some of its neighbors belong to different classes. As message-passing has a smoothing effect \citep{Oon20}, the resulting predicted logits should reflect the phenomenon that several weights for different classes can have similar values. Therefore, non-uniformity may reveal hidden graph structural information associated with the embedding of different node classes in the ambient graph. 
In this paper, we theoretically study how the above-mentioned notion of non-uniformity provides us with graph structural knowledge (e.g., whether a node is close in graph distance to a class boundary).  
Based on the findings, we propose a simple multi-step model with independent modulo components, whose effectiveness is demonstrated with numerical experiments. Proofs of theoretical results are provided in \cref{sec:pro}.
Our main contributions are summarized as follows:
\begin{itemize}
    \item We introduce the notion of label non-uniformity of probability weights associated with label classes. We demonstrate experimentally that nodes with small non-uniformity are harder to classify correctly.
    \item We analyze locations (with respect to class boundaries) of nodes with small label non-uniformity and provide insights to increase their non-uniformity. 
    \item We propose two algorithms to boost the performance of a given base model: increasing training samples with high predicted label non-uniformity or dropping edges to reduce the maximal cut size of a node set of small non-uniformity. Experiments indicate that by adding our algorithm modules to a base model, its performance can be improved. 
\end{itemize} 

\section{Label non-uniformity and node selection} \label{sec:uni}

Suppose $\calG=(\calV,\calE)$ is an undirected graph with $\calV$ the vertex set and $\calE$ the edge set. Let $d_{\calG}$ be the metric with $d_{\calG}(v,v')$ being the length of the shortest path between $v,v' \in \calV$. 

We consider the node classification problem. Recall that there is a finite set $\mathbb{S}$ of class labels. Each $v\in \calV$ has a label $s\in \mathbb{S}$. For each $s\in \mathbb{S}$, let $\calV_s$ be the set of nodes with class label $s$. We want to train a model based on a training set $\mathfrak{R}$ to predict the (unknown) labels of a test set $\mathfrak{T}$. To motivate the concepts we want to bring about, consider a variant of node classification as follows, which is interesting in its own right. We point out that the problem is a hypothetical thought experiment, but not the main subject of the paper. The purpose of the study is to investigate empirical evidence of the relations among easily classifiable nodes, the predicted logits, and graph topology.
\begin{Problem}\label{prob:fao}
We use $\mathfrak{R}$ to train a GCN. For given $\alpha \leq 1$, one is required to choose a subset $\mathfrak{T}' \subset \mathfrak{T}$ of size $\alpha |\mathfrak{T}|$ so that the test accuracy of the trained model on $\mathfrak{T}'$ is maximized. 
\end{Problem}

As a special case, $\alpha=1$ is the original node classification problem. The modified problem essentially asks one to find a subset of $\mathfrak{T}$ with a prescribed size, on which one can make an accurate prediction. Intuitively, suppose $v$ has label $s$. Then it is more likely to predict correctly for $v$ if it is close in distance to training nodes in $\calV_s$. This prompts:    

\begin{Method}[M1: the geometric method]
For each node $v\in \mathfrak{T}$, we assgin with it a pair of numbers $(f_1(v),f_2(v))$, where $f_1(v) = \min_{v'\in \mathfrak{R}}d_{\calG}(v,v')$. For $f_2$, we first find the set $\argmin_{v'\in \mathfrak{R}}d_{\calG}(v,v')$. Then let $g(v)$ be the percentage of the largest label class in $\argmin_{v'\in \mathfrak{R}}d_{\calG}(v,v')$ and $f_2(v) = 1-g(v)$. We rank $\mathfrak{T}$ based on the lexographic order of $\big(f_1(v),f_2(v)\big)$ and $\mathfrak{T}'$ is chosen following the ordering. The second number $f_2(v)$ is the tie-breaker for nodes with the same $f_1$ value.  
\end{Method}

\begin{figure}
    \centering \includegraphics[width=0.83\columnwidth]{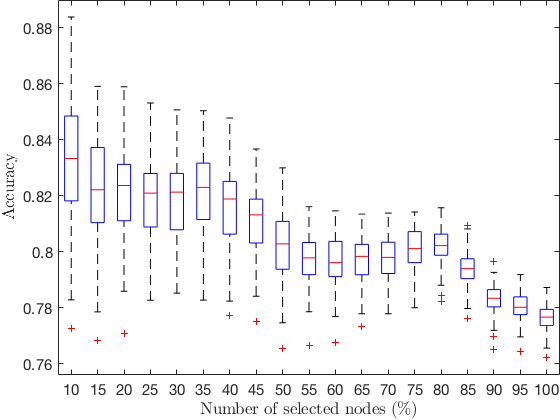} \includegraphics[width=0.83\columnwidth]{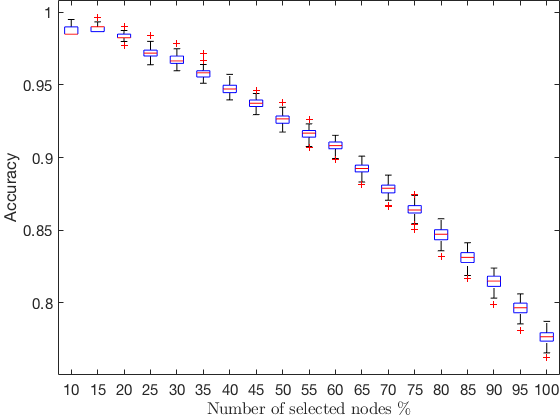}
 \caption{Accuracy for M1, M2 respectively on the Cora dataset.}\label{fig:rank}
 \end{figure}

We run multiple experiments on the Cora dataset and the results are shown in \cref{fig:rank} (top). We see that the performance pattern as $\alpha$ increases indeed supports the geometric intuition. However, we may push the intuition further to enhance the performance. We first introduce the notion of the boundary of a vertex set. Let $\calV'\subset \calV$ be a subset of vertices. Its \emph{boundary} $\partial \calV'$ is: 
\begin{align*}
    \partial \calV' = \{v\in \calV' \mid \exists v'\notin \calV' \text{ s.t.\ } (v,v')\in \calE\}.
\end{align*}
Another insight on graph structure we may leverage is that when we want to determine the class label $s$ of $v\in \calV_s$, it is likely that we are less certain if $v$ is closer to the boundary $\partial \calV_s$. This is because there are nodes with different class labels nearby (in $d_{\calG}$). 

An immediate challenge to exploit the above insight is that before knowing the true labels of all the nodes, $\partial \calV_s$ is usually obscure to us. Hence, we need to find a way to estimate: \emph{for any given $v$ with label $s$, how far away it is from $\partial \calV_s$ without knowing the label $s$.} For this, we make use of the concept of non-uniformity associated with label distribution. 

Recall that from the logits of the base model trained on $\mathfrak{R}$, we may apply softmax to obtain a vector of numbers $\bmu_v$ with $\bmu_v(s)\in [0,1], s\in \mathbb{S}$ for each $v\in \calV$. Moreover, $\mathbb{S}$ is a finite set and $\sum_{s\in \mathbb{S}}\bmu_v(s)=1$, therefore $\bmu_v(s)$ can be interpreted as the probability weight of node $v$ having label $s$. Following \citet{Ji23},
the \emph{label non-uniformity} at $v$ is defined by 
\begin{align} \label{eq:wsi}
    w(v) = \sum_{s\in \mathbb{S}} \abs*{\bmu_v(s) - \frac{1}{|\mathbb{S}|}}.
\end{align}
The notion is derived from the $2$-Wasserstein distance \citep{Vil09, Ji23} between $\bmu_v$ and the uniform distribution. We justify in \cref{sec:pro}. The function $w(\cdot)$ is the key player of our approach suggested by our title.

We propose to use $w(v)$ to measure whether $v$ is close in distance to some class boundary. Nodes closer to class boundaries are expected to be harder to classify. This prompts the following approach to \cref{prob:fao}. 

\begin{Method}[M2: distribution non-uniformity]
In training, we obtain a probability distribution $\bmu_v$ on $\mathbb{S}$ for each $v\in \calV$ (in the last layer). We rank $\mathfrak{T}$ according to non-uniformity $w(v)$ and $\mathfrak{T}'$ is chosen following the ordering. 
\end{Method}

The results for the approach M2 are also shown in \figref{fig:rank} (bottom). It has a much better performance, which is also more consistent. In the following, we summarize observations from the experiments that eventually lead to our proposed GNN model.
\begin{enumerate}[(a)]
    \item Using the non-uniformity $w(\cdot)$, we can reasonably identify nodes whose labels are correctly predicted.
    \item Comparing M1 and M2, we notice the correctly labeled nodes are not necessarily close to training nodes in $\mathfrak{R}$.
\end{enumerate}

In the next section, we theoretically justify our graph structural intuition regarding the function of label non-uniformity $w(\cdot)$. Moreover, we propose a new GNN model based on the theoretical findings. 

\section{Label non-uniformity and graph structural information} \label{sec:gta}

As we speculate in the previous section, we want to analyze the non-uniformity $w(v)$ in view of the structural information of $\calG$ in this section. Intuitively, if node $v$ has large non-uniformity $w(v)$, its distribution weights $\bmu_v(s), s\in \mathbb{S}$ are either close to $0$ or $1$. As $w(\cdot)$ is computed from $\bmu_v$, to study $w(\cdot)$, we may instead analyze $\bmu_v$ in this section. 

\subsection{Flow of probability weights}
For an overview, we first understand the flow of probability weights from a subset of nodes to another by analyzing the solution of an optimization problem. The study allows us to acquire geometric information such as the location of class boundaries using the weights. Furthermore, we analyze how to create a bottleneck near class boundaries to widen the difference in probability weights for nodes near class boundaries with different labels.

For the graph $\calG=(\calV,\calE)$, let $L_{\calG}$ be the Laplacian of $\calG$. To simplify the analysis, we study one class label at a time. Fix a class label $s\in \mathbb{S}$ and consider the \emph{graph signal of probability weights} $(\bmu_{v}(s))_{v\in \calV}$. We expect a model to make an accurate fitting on the training set $\mathfrak{R}$, therefore it is reasonable to assume that $\bmu_{v}(s) \approx 1, v\in \mathfrak{R}\cap \calV_s$ and $\bmu_{v'}(s) \approx 0, v'\in \mathfrak{R}\cap \calV_{s'}, s'\neq s$.

With this in mind, we assume that there are disjoint non-empty subsets $\calO_0$ and $\calO_1$ of $\calV$ and an approximation $\boldf$ of $(\bmu_{v}(s))_{v\in \calV}$ such that the graph signal $\boldf$ is observed at $\calO = \calO_0\cup \calO_1$.\footnote{To avoid cluttered notations, we omit $s$ from the symbol $\boldf$.} Moreover, $\boldf_v = 0, v \in \calO_0$ and $\boldf_{v'} = 1, v'\in \calO_1$. The primary example is $\calO_1 \subset \mathfrak{R}\cap \calV_s$ and $\calO_0 \subset \mathfrak{R}\backslash \calV_s$. A \emph{smooth interpolation} $\widetilde{\boldf}$ of $\boldf$ is \begin{align*}
    \widetilde{\boldf} = \argmin_{\boldf': \boldf'_v = \boldf_v,v\in \calO} {\boldf'}\T L_{\calG}{\boldf'}.
\end{align*} 
This is the well-studied Laplacian quadratic form, and its use in interpolating graph signals is justified in \citet{Zhu03, Zho04, And07, Shu13, Nar13}. We shall justify (in \cref{sec:pro}) that if $\boldf$ is a good approximation of $(\bmu_{v}(s))_{v\in \calV}$, then they have similar smooth interpolations. We use $\widetilde{\boldf}$ as a proxy of the true $\boldf$ and hence $(\bmu_{v}(s))_{v\in \calV}$ on the entire graph, based on the assumption that $\boldf$ and $(\bmu_{v}(s))_{v\in \calV}$ are smooth. Therefore, we have reduced the study of $(\bmu_{v}(s))_{v\in \calV}$ to that of $\widetilde{\boldf}$, and the fundamental result is the following \emph{averaging property}. 

\begin{Lemma} \label{lem:fev}
    For every $v\notin \calO$, let $\mathfrak{d}_v$ be its degree. We have
    \begin{align}
        \widetilde{\boldf}_v = \frac{1}{\mathfrak{d}_v}\sum_{(v,v')\in \calE}\widetilde{\boldf}_{v'}.
    \end{align}
\end{Lemma}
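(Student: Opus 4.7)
The plan is to exploit the first-order optimality conditions of the constrained quadratic minimization that defines $\widetilde{\boldf}$. Since the constraints only fix the values on $\calO = \calO_0 \cup \calO_1$, the minimizer is free in each coordinate $\boldf'_v$ with $v \notin \calO$, so at the optimum the partial derivative of the objective with respect to each such coordinate must vanish. The averaging identity will then follow by a direct rearrangement.

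Concretely, I would first rewrite the Laplacian quadratic form in its edge-sum form
\begin{align*}
    {\boldf'}\T L_{\calG}{\boldf'} = \sum_{(u,w)\in \calE}(\boldf'_u - \boldf'_w)^2,
\end{align*}
which is the standard identity for the (unweighted) graph Laplacian. For $v\notin \calO$, the only terms in this sum that depend on $\boldf'_v$ are those indexed by edges incident to $v$. Differentiating with respect to $\boldf'_v$ gives
\begin{align*}
    \frac{\partial}{\partial \boldf'_v}\sum_{(u,w)\in \calE}(\boldf'_u-\boldf'_w)^2 = 2\sum_{(v,v')\in\calE}(\boldf'_v - \boldf'_{v'}) = 2\bigl(\mathfrak{d}_v\boldf'_v - \sum_{(v,v')\in\calE}\boldf'_{v'}\bigr).
\end{align*}
Setting this derivative to zero at $\boldf' = \widetilde{\boldf}$ and solving for $\widetilde{\boldf}_v$ yields exactly the claimed averaging property.

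To justify that the first-order condition is sufficient (and that the minimizer exists and the problem is well posed), I would note that $L_{\calG}$ is positive semidefinite, so ${\boldf'}\T L_{\calG}{\boldf'}$ is convex in $\boldf'$, hence convex in the free coordinates $\{\boldf'_v : v\notin\calO\}$ after substituting the fixed boundary values. Provided $\calO$ intersects every connected component of $\calG$ (which holds in the setting of interest, since the constant-$1$ vector on $\calO_1$ and the constant-$0$ vector on $\calO_0$ fix the harmonic extension uniquely), the restricted Hessian is positive definite, so the critical point is the unique global minimum.

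There is essentially no hard step here; the argument is a one-line application of the KKT/first-order conditions for a convex quadratic with linear equality constraints. The only point that merits a line of care is invoking convexity and the component-wise non-degeneracy of the constraint set to conclude that the stationarity condition characterizes $\widetilde{\boldf}$ uniquely, rather than just being necessary.
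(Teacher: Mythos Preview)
Your proof is correct and follows exactly the same approach as the paper, which simply states that taking partial derivatives of ${\boldf'}\T L_{\calG}{\boldf'}$ with respect to $\boldf'_v$ and setting them to zero yields the identity. Your version is just a more fleshed-out exposition of that one-line argument, and your extra remarks on convexity and uniqueness are handled separately in the paper (in the lemma immediately following this one) rather than folded into this proof.
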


As a consequence, we can view $\calO_0$ and $\calO_1$ analogous to the poles in a magnet, illustrated in \figref{fig:flow} (a). To be more precise, for each $v\in \calV$, define the \emph{level-component $\calC_v$} of $v$ w.r.t.\ $\widetilde{\boldf}$ to be the connected component of $\{v'\in \calV\mid \widetilde{\boldf}_{v'} = \widetilde{\boldf}_v\}$ containing $v$. Then the following holds. 

\begin{Theorem} \label{prop:fev}
    For each $v\in \calV$, there is a path $\calP=\{v_0,\ldots,v_m\}$ such that the following holds:
    \begin{enumerate}[(a)]
        \item $v_0\in \calO_0$ and $v_m\in \calO_1$.
        \item $\widetilde{\boldf}$ is strictly increasing on $\calP$: $\widetilde{\boldf}_{v_i} < \widetilde{\boldf}_{v_{i+1}}, 1\leq i<m$.
        \item $\calP\cap \calC_v \neq \emptyset$. 
    \end{enumerate}
\end{Theorem}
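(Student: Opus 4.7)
The plan is to exploit the averaging property of \cref{lem:fev} as a discrete maximum principle and use it to construct a strictly ascending path from a chosen node in $\calC_v$ to $\calO_1$ and a strictly descending path from the same node to $\calO_0$, then concatenate the reversed descending part with the ascending part to produce the desired $\calP$ passing through $\calC_v$.

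First I would establish that $\widetilde{\boldf}$ takes values in $[0,1]$ with the maximum $1$ attained only on (nodes reachable through flat plateaus from) $\calO_1$ and the minimum $0$ on $\calO_0$. Indeed, if a node $u\notin\calO$ attains the global maximum $M$, then the averaging identity $\widetilde{\boldf}_u = \mathfrak{d}_u^{-1}\sum_{(u,u')\in\calE}\widetilde{\boldf}_{u'}$ together with $\widetilde{\boldf}_{u'}\le M$ forces every neighbor to also have value $M$; propagating along the (connected) graph, the level set of $M$ is closed under neighbors and must intersect $\calO$, giving $M=1$. Symmetrically the global minimum is $0$. Next, for any $u\in\calC_v$ with $\widetilde{\boldf}_u\in(0,1)$, I claim there is a node $w\in\calC_v$ having a neighbor $w'\notin\calC_v$ with $\widetilde{\boldf}_{w'} > \widetilde{\boldf}_u$. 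Otherwise every edge exiting $\calC_v$ would go to a node of strictly smaller value; but since $\widetilde{\boldf}_u\in(0,1)$ implies $\calC_v\cap\calO=\emptyset$, the averaging identity applied at a boundary node $w\in\calC_v$ possessing an exit edge would express $\widetilde{\boldf}_w$ as the mean of values that are all $\le\widetilde{\boldf}_u$ with at least one strict inequality, contradicting $\widetilde{\boldf}_w=\widetilde{\boldf}_u$. The symmetric argument yields a downward exit edge.

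I would then iterate this exit-edge argument. Starting from the upward exit neighbor $w'$, either $w'\in\calO_1$ and the ascending leg is complete, or $\widetilde{\boldf}_{w'}\in(\widetilde{\boldf}_u,1)$ and the same lemma applied to $\calC_{w'}$ produces a further exit with strictly larger value. Because $\calV$ is finite there are only finitely many distinct values $\widetilde{\boldf}_{v'}$, so strict monotonicity guarantees termination at a node $v_m\in\calO_1$. A completely analogous descending iteration starting from the downward exit neighbor terminates at some $v_0\in\calO_0$. Concatenating the descending leg (reversed) with the ascending leg at one fixed node of $\calC_v$ gives $\calP=\{v_0,\ldots,v_m\}$: conditions (a) and (c) are immediate, and strict monotonicity (b) is preserved across the join because the two legs meet at exactly one vertex and branch strictly upward/downward from it.

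The main obstacle will be the careful handling of the boundary cases, in particular (i) that a node with value $0$ or $1$ outside of $\calO$ cannot genuinely occur for an interior node (so that the iteration really reaches $\calO_0$ or $\calO_1$ rather than getting stuck on a plateau), (ii) that the exit-edge lemma remains valid when a boundary node $w\in\calC_v$ is itself in $\calO$, which forces a small case split on whether $\widetilde{\boldf}_v\in\{0,1\}$, and (iii) ensuring that the concatenated path is a genuine walk (no repeated vertices in a way that breaks monotonicity). All of these reduce to careful bookkeeping once the maximum principle is stated precisely, but writing them down correctly is the delicate part of the argument.
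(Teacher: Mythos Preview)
Your approach is essentially the same as the paper's. The paper first contracts each level-component to a single vertex, forming a quotient graph $\calG'$, orients the edges of $\calG'$ by increasing $\widetilde{\boldf}$ value, and then observes via the averaging property that every oriented edge $(v,v')$ in $\calG$ can be extended forward to an oriented edge $(v',u')$ unless $v'\in\calO_1$, and backward unless $v\in\calO_0$; chaining such ``matched'' edges starting at $c_v$ produces the path. Your direct iteration on level-components accomplishes exactly this without the quotient-graph language, and your explicit maximum-principle framing is a nice touch.

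One small point worth tightening: your exit-edge lemma as stated only shows that \emph{some} node of $\calC_v$ has an upward exit and (by the symmetric argument) \emph{some} node has a downward exit; these need not coincide, so you cannot yet ``concatenate at one fixed node of $\calC_v$'' without a flat segment. The fix is already contained in your own argument: apply the averaging identity locally at any single boundary vertex $w\in\partial\calC_v$ with $w\notin\calO$. Its neighbors average to $\widetilde{\boldf}_w$, and since at least one neighbor is external (value $\neq\widetilde{\boldf}_w$), there must be neighbors strictly above and strictly below. Hence both legs can be launched from this single $w$, resolving your obstacle~(iii). The paper's proof glosses over this same point.
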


The definition of a level-component is for the technical issue that there might be neighboring nodes with the same $\widetilde{\boldf}$ value. Disregarding this technicality, intuitively, \cref{prop:fev} claims that $\widetilde{\boldf}$ values at nodes close to $\calO_0$ should be close to $0$ and gradually increase to $1$ along the path $\calP$. The following corollary of the theorem describes the signal pattern emanating from $\calO_1$ as illustrated in \figref{fig:flow}(b).

\begin{figure}[!htb]
    \centering
    \includegraphics[scale=0.5]{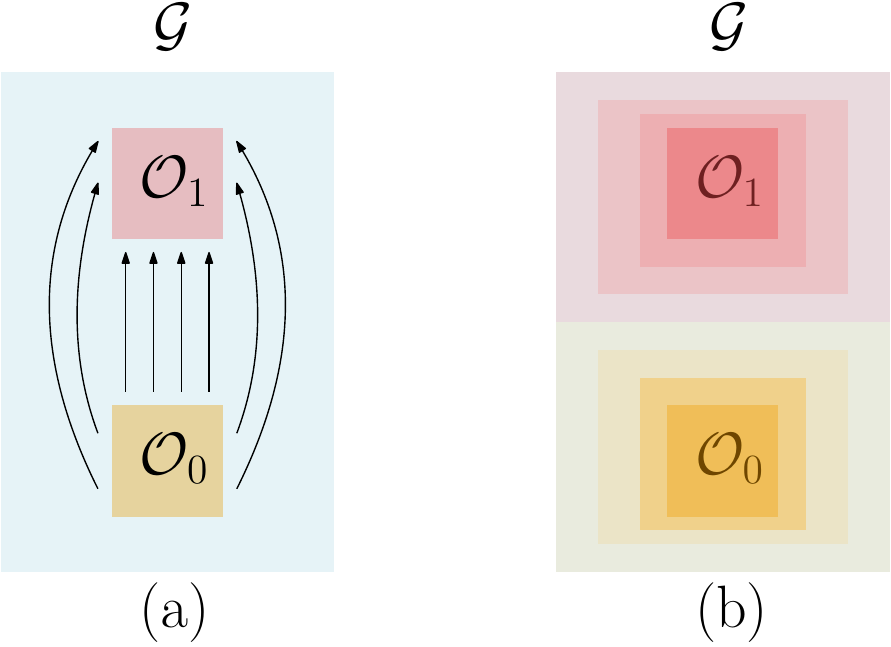}
    \caption{The Venn diagrams illustrate \cref{prop:fev} and \cref{coro:fau}. In (a), the arrows are paths along which $\widetilde{\boldf}$ increases. In (b), if we color the graph according to $\widetilde{\boldf}$ value, then it should be layered as shown.}
    \label{fig:flow}
\end{figure}

\begin{Corollary} \label{coro:fau}
Suppose for any $u,v\notin \calO_0\cup \calO_1$, we have $\widetilde{\boldf}_u \neq \widetilde{\boldf}_v$. For any $0<r<1$, define $\calV_r=\{v\in \calV \mid \widetilde{\boldf}_v \leq r\}$ and let $\calG_{\calV_r}$ be its induced subgraph. Denote the complement of $\calV_r$ by $\calV_r^c$. If $\calG_{\calV_{r_0}}$ is connected for some $r_0$, then so is $\calG_{\calV_r}$ for any $r\geq r_0$. Similarly, if $\calG_{\calV_{r_0}^c}$ is connected for some $r_0$, then so is $\calG_{\calV_r^c}$ for any $r\leq r_0$.
\end{Corollary}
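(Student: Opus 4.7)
The plan is to leverage \cref{prop:fev} together with the distinctness hypothesis to produce, for every vertex $v \in \calV \setminus (\calO_0 \cup \calO_1)$, a non-increasing path from $v$ to $\calO_0$ and a non-decreasing path from $v$ to $\calO_1$. Since the distinctness hypothesis forces $\calC_v = \{v\}$ for such $v$, the path $\calP = (v_0, \ldots, v_m)$ supplied by \cref{prop:fev} must contain $v$ itself, say as $v_j$. Then the reversed subpath $(v_j, v_{j-1}, \ldots, v_0)$ has $\widetilde{\boldf}$ values bounded above by $\widetilde{\boldf}_v$ and terminates in $\calO_0$, while the forward subpath $(v_j, v_{j+1}, \ldots, v_m)$ has $\widetilde{\boldf}$ values bounded below by $\widetilde{\boldf}_v$ and terminates in $\calO_1$.

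For the sublevel claim, fix $r \geq r_0$. Since $r_0 \in (0,1)$, we have $\calO_0 \subset \calV_{r_0} \subset \calV_r$, while $\calO_1 \cap \calV_r = \emptyset$ because $r < 1$. Take any $v \in \calV_r$. If $v \in \calO_0$, then already $v \in \calV_{r_0}$. Otherwise $v \notin \calO_0 \cup \calO_1$, and the descending path to $\calO_0$ described above consists of vertices of $\widetilde{\boldf}$-value at most $\widetilde{\boldf}_v \leq r$, so it lies inside $\calG_{\calV_r}$ and its endpoint lies in $\calO_0 \subset \calV_{r_0}$. Hence every vertex of $\calV_r$ is connected, within $\calG_{\calV_r}$, to some vertex of $\calV_{r_0}$. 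Since $\calG_{\calV_{r_0}}$ is connected by hypothesis, any pair of vertices of $\calV_r$ can be joined inside $\calG_{\calV_r}$ by routing through $\calV_{r_0}$, proving that $\calG_{\calV_r}$ is connected.

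The superlevel claim follows by a completely symmetric argument. For $r \leq r_0$ we have $\calO_1 \subset \calV_{r_0}^c \subset \calV_r^c$ and $\calO_0 \cap \calV_r^c = \emptyset$ since $r > 0$. The ascending path from any $v \in \calV_r^c \setminus \calO_1$ to $\calO_1$ lies inside $\calG_{\calV_r^c}$ because its values are at least $\widetilde{\boldf}_v > r$, and its endpoint lies in $\calO_1 \subset \calV_{r_0}^c$. Connectedness of $\calG_{\calV_{r_0}^c}$ then routes any pair of points of $\calV_r^c$ through $\calV_{r_0}^c$.

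The one delicate step is the precise application of \cref{prop:fev}: that theorem only guarantees the monotone path meets $\calC_v$, not that it passes through $v$. Without the distinctness hypothesis one would need to argue separately that the constant-value detour inside $\calC_v$ does not leave $\calV_r$ or $\calV_r^c$, which would require some extra book-keeping about how $\calC_v$ straddles the threshold $r$. The distinctness hypothesis of the corollary collapses $\calC_v$ to the singleton $\{v\}$ whenever $v \notin \calO_0 \cup \calO_1$, which bypasses this technicality and delivers the path argument in one stroke.
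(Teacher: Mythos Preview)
Your proof is correct and uses the same key ingredient as the paper: both invoke \cref{prop:fev} together with the distinctness hypothesis (which collapses $\calC_v$ to $\{v\}$ for $v\notin\calO_0\cup\calO_1$) to obtain a strictly monotone path through each such $v$. The only difference is organizational: the paper argues by induction on $|\calV_r|-|\calV_{r_0}|$, attaching one new vertex at a time via its immediate predecessor on the monotone path, whereas you connect every vertex of $\calV_r$ directly to $\calO_0\subset\calV_{r_0}$ along the full descending subpath and then route through the connected $\calG_{\calV_{r_0}}$ --- a slightly more direct packaging of the same idea.
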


Intuitively, the corollary describes the picture that if $\mathcal{G}_{\mathcal{V}_{r_0}}$ is connected, then $\mathcal{G}_{\mathcal{V}_{r}}$ must ``grow'' from $\mathcal{G}_{\mathcal{V}_{r_0}}$ for any $r\geq r_0$. We have seen an overall pattern of $\widetilde{\boldf}$ on $\calG$. Next, we study more refined details of its values along a set boundary.

\subsection{Weights near a boundary} \label{sec:wei}

To gain further insights, we describe another consequence of the averaging property in this subsection. For any subset $\calV_0\subset \calV$, in addition to its boundary $\partial \calV_0$, we define $\Gamma(\calV_0) \subset \calE$ to be the set of edges $(v,v')$ with $v\in \calV_0$ and $v'\in \calV_1$, where $\calV_1 = \calV\backslash \calV_0$. Moreover, we introduce the quantity 
\begin{align*}
    A(\widetilde{\boldf},\calV_0) = \sum_{\substack{v\in \partial \calV_0,\\ (v,v')\in \Gamma(\calV_0)}} \widetilde{\boldf}_v
\end{align*}
as a weighted sum of $\widetilde{\boldf}_v, v\in \partial \calV_0$.

\begin{Theorem} \label{cor:lvb}
    Suppose $\calV_0$ and $\calV_1 = \calV\backslash \calV_0$ are disjoint subsets of $\calV$ such that $\calO_i$ is contained in the interior $\calV_i\backslash \partial \calV_i$ of $\calV_i, i=0,1$. 
    Assume that $0<\widetilde{\boldf}_v<1$ for $v\notin \calO$ and let $a = \min_{v\notin \calO} \widetilde{\boldf}_v$ and $b = \min_{v\notin \calO} \{1-\widetilde{\boldf}_v\}$. Then 
    \begin{align} \label{eq:awc}
    A(\widetilde{\boldf},\calV_0) \leq  A(\widetilde{\boldf},\calV_1) - \max(b|\Gamma(\calO_1)|, a|\Gamma(\calO_0)|).
    \end{align}
\end{Theorem}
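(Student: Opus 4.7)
The plan is to recognize $A(\widetilde{\boldf},\calV_1) - A(\widetilde{\boldf},\calV_0)$ as the net cut-flux of the harmonic signal $\widetilde{\boldf}$ across $\Gamma(\calV_0)$, and to use \cref{lem:fev} as a discrete divergence-free identity that rewrites that flux as a sum concentrated on the edges incident to $\calO_0$ or $\calO_1$. Unfolding the definition of $A$, each cut edge $(v_0,v_1)\in\Gamma(\calV_0)$ (with $v_i\in\calV_i$) contributes $\widetilde{\boldf}_{v_0}$ to $A(\widetilde{\boldf},\calV_0)$ and $\widetilde{\boldf}_{v_1}$ to $A(\widetilde{\boldf},\calV_1)$, so
\begin{align*}
A(\widetilde{\boldf},\calV_1) - A(\widetilde{\boldf},\calV_0) = \sum_{(v_0,v_1)\in\Gamma(\calV_0)}\bigl(\widetilde{\boldf}_{v_1}-\widetilde{\boldf}_{v_0}\bigr),
\end{align*}
reducing the theorem to lower-bounding this cut-flux.

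Next, I would rewrite \cref{lem:fev} as the harmonicity condition $\sum_{v'\sim v}(\widetilde{\boldf}_{v'}-\widetilde{\boldf}_v)=0$ for every $v\notin\calO$, and sum it over all $v\in\calV_1\setminus\calO_1$ (these lie in $\calV\setminus\calO$ since $\calO_0\subset\calV_0$). Grouping the resulting edges by endpoint location, (i) edges inside $\calV_1\setminus\calO_1$ telescope to zero, (ii) edges joining $\calV_1\setminus\calO_1$ to $\calO_1$ — which, by the hypothesis $\calO_1\subset\calV_1\setminus\partial\calV_1$, make up exactly $\Gamma(\calO_1)$ — contribute $(1-\widetilde{\boldf}_v)$ since $\widetilde{\boldf}\equiv 1$ on $\calO_1$, and (iii) the remaining edges are precisely the cut edges $\Gamma(\calV_0)$, contributing $(\widetilde{\boldf}_{v_0}-\widetilde{\boldf}_{v_1})$. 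Collecting these three contributions yields the identity
\begin{align*}
\sum_{(v_0,v_1)\in\Gamma(\calV_0)}\bigl(\widetilde{\boldf}_{v_1}-\widetilde{\boldf}_{v_0}\bigr) = \sum_{(v,v')\in\Gamma(\calO_1)}\bigl(1-\widetilde{\boldf}_v\bigr),
\end{align*}
where $v\in\calV_1\setminus\calO_1\subset\calV\setminus\calO$ denotes the non-$\calO_1$ endpoint of each such edge; the pointwise bound $1-\widetilde{\boldf}_v\geq b$ then delivers $A(\widetilde{\boldf},\calV_1)-A(\widetilde{\boldf},\calV_0)\geq b|\Gamma(\calO_1)|$.

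The symmetric argument — summing the harmonicity condition over $v\in\calV_0\setminus\calO_0$ and using $\widetilde{\boldf}\equiv 0$ on $\calO_0$ together with $\widetilde{\boldf}_v\geq a$ — produces the twin bound $A(\widetilde{\boldf},\calV_1)-A(\widetilde{\boldf},\calV_0)\geq a|\Gamma(\calO_0)|$, and taking the maximum of the two gives \cref{eq:awc}. I expect the main obstacle to be purely combinatorial bookkeeping: making sure every edge in the double sum is classified correctly so that interior edges cancel, cut edges in $\Gamma(\calV_0)$ reassemble cleanly, and no unaccounted cross edges appear. This is exactly where the strict-interior hypothesis $\calO_i\subset\calV_i\setminus\partial\calV_i$ is used — it forces every neighbor of $\calO_i$ to lie in $\calV_i\setminus\calO_i$, so edges incident to $\calO_i$ never leak across the cut. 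Once that classification is in place, the theorem follows by applying the pointwise inequalities $1-\widetilde{\boldf}_v\geq b$ and $\widetilde{\boldf}_v\geq a$ edge by edge.
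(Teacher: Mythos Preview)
Your proposal is correct and is essentially the same argument as the paper's: both sum the averaging identity of \cref{lem:fev} over one side of the partition so that interior edges telescope and the cut-flux $A(\widetilde{\boldf},\calV_1)-A(\widetilde{\boldf},\calV_0)$ collapses to a sum of $(1-\widetilde{\boldf}_v)$ over $\Gamma(\calO_1)$ (respectively $\widetilde{\boldf}_v$ over $\Gamma(\calO_0)$), after which the pointwise bounds $1-\widetilde{\boldf}_v\ge b$ and $\widetilde{\boldf}_v\ge a$ finish. The paper packages the telescoping step as a separate lemma applied to two induced subgraphs, whereas you sum directly over $\calV_i\setminus\calO_i$ in $\calG$, but the identities obtained and the use of the interior hypothesis $\calO_i\subset\calV_i\setminus\partial\calV_i$ are the same.
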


\Cref{cor:lvb} claims that if the graph is partitioned into two parts $\calV_0, \calV_1$ (e.g., $\calV_1$ consists of nodes with class label $s$, where $s$ is as in the first paragraph of this section) containing $\calO_0$ and $\calO_1$ respectively, 
then the weighted sum of the values of $\widetilde{\boldf}$ along the boundary on the $\calO_0$ side are smaller than those along the boundary on the $\calO_1$ side. To find the average difference, it suffices to divide both sides of (\ref{eq:awc}) by $|\Gamma(\calV_0)|$, as both $A(\widetilde{\boldf},\calV_0)$ and $A(\widetilde{\boldf},\calV_1)$ have $|\Gamma(\calV_0)|$ terms in their respective summation. Increasing the average $\widetilde{\boldf}$ difference is exactly what we are aiming for from \cref{sec:uni}: reduce the number of nodes with small non-uniformity. 

\begin{figure}[!htb]
    \centering
    \includegraphics[scale=0.5]{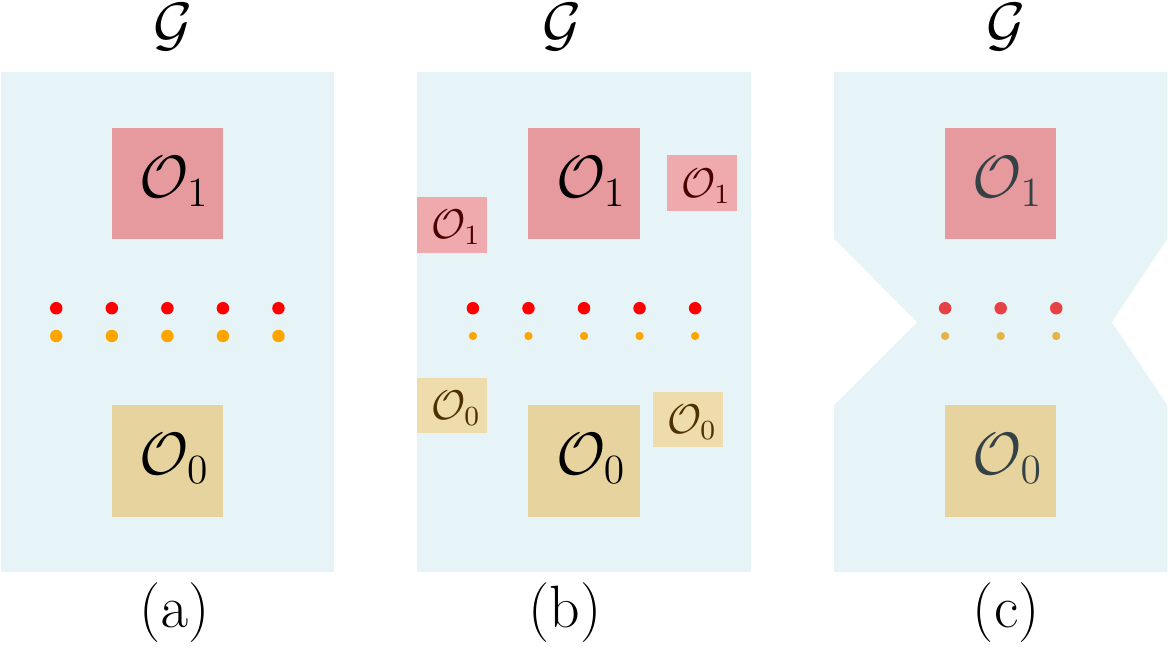}
    \caption{The Venn diagram illustrates that to differentiate nodes using $\widetilde{\boldf}$ value along the boundary, we may either increase the size of $\calO_0$ and $\calO_1$ as in (b) or create a bottleneck as in (c).}
    \label{fig:add}
\end{figure}

From \cref{prop:fev} and \cref{cor:lvb}, we have an overall picture:
\begin{enumerate}[(a)]
    \item By \cref{prop:fev}, the values of $\widetilde{\boldf}$ gradually increase from nodes near $\calO_0$ to those near $\calO_1$. The boundary should be more likely to occur at nodes with $\widetilde{\boldf}$ further away from $0$ and $1$. 
    \item By \cref{cor:lvb}, to make $\widetilde{\boldf}$ closer to $0$ or $1$ on average even near class boundaries, we have two options (illustrated in \figref{fig:add}): (i) enlarge $\calO_1$ and $\calO_0$, or (ii) reduce the size of $\Gamma(\calV_i), i=0,1$, i.e., we want to create a bottleneck. We discuss the second option further in the next subsection. 
\end{enumerate}

\subsection{Graph bottleneck} \label{sec:gbo}

The \emph{Cheeger constant} or \emph{edge expansion} \citep{Moh89} $h(\calG)$ is usually used to measure the ``bottleneck size'' of a connected graph $\calG$:
\begin{align*}
    h(\calG) = \min_{\calV'\subset \calV} \frac{|\Gamma(\calV')|}{\min(|\calV'|,|{\calV'}^c|)}.
\end{align*}
Other related concepts are the \emph{max-cut size} $C(\calG)$ and \emph{min-cut size} $c(\calG)$. Recall that a \emph{cut} is $\Gamma(\calV')$ for $\emptyset\neq \calV'\subset \calV$ such that ${\calV'}^{c}\neq \emptyset$. Then $C(\calG)$ (resp.\ $c(\calG)$) is the largest (resp.\ smallest) among the size of every cut of $\calG$. 

For any $\calV' \subset \calV$, let $\calG_{\calV'}$ be its induced subgraph. We say a numerical invariant is \emph{independent of $\calG_{\calV'}$} if it does not change, even if the edge set of $\calG_{\calV'}$ is modified. As we shall see, the proposed method requires one to modify the edge set of $\calG_{\calV'}$. Therefore, this notion of independence is important for theoretical results.  

\begin{Theorem} \label{thm:fcc}
For $\calV'\subset \calV$, suppose $\calG_{{\calV'}^c}$ has $2$-connected components $\calU_0$, $\calU_1$, whose respective neighbors in $\calV'$ are disjoint. Then there are constants $c_0,c_1 >0$ independent of $\calG_{\calV'}$ such that: if 
    $C(\calG_{\calV'}) < c_0$, then $h(\calG) \leq C(\calG_{\calV'})/c_1$.
Moreover, if $\calU \subset \calV$ realizes $h(\calG)$, then $\calU\cap \calV' \neq \emptyset$ and $\calU^c\cap \calV' \neq \emptyset$. 
\end{Theorem}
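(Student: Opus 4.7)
The plan is in three pieces: (1) exhibit a specific cut of $\calG$ whose Cheeger ratio is at most $C(\calG_{\calV'})/c_1$; (2) show that every cut whose small side lies entirely in $\calV'^c$ has Cheeger ratio bounded below by a quantity determined only by the data outside $\calG_{\calV'}$; (3) choose $c_0$ small enough that (1) strictly beats (2), which simultaneously proves the inequality on $h(\calG)$ and rules out a Cheeger realizer that avoids $\calV'$.

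\emph{Piece (1).} Let $N_i \subset \calV'$ be the set of vertices of $\calV'$ adjacent to $\calU_i$, so that $N_0 \cap N_1 = \emptyset$ by hypothesis, and set $M = \calV' \setminus (N_0 \cup N_1)$. For any partition $M = M_0 \sqcup M_1$, define $\widehat{\calU} := \calU_0 \cup N_0 \cup M_0$. The two structural hypotheses force every edge of $\Gamma(\widehat{\calU})$ to lie inside $\calG_{\calV'}$: no edge joins $\calU_0$ to $\calU_1$ (they are different components of $\calG_{\calV'^c}$), and every edge from $\calU_0$ to $\calV'$ lands in $N_0 \subset \widehat{\calU}$ (symmetrically for $\calU_1$). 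Hence $|\Gamma(\widehat{\calU})| \leq |E(\calG_{\calV'})|$; since a uniformly random bipartition cuts each edge with probability $\tfrac12$, $|E(\calG_{\calV'})| \leq 2\,C(\calG_{\calV'})$. Combined with $|\widehat{\calU}| \geq |\calU_0|$ and $|\widehat{\calU}^c| \geq |\calU_1|$, this gives
\begin{align*}
    h(\calG) \leq \frac{|\Gamma(\widehat{\calU})|}{\min(|\widehat{\calU}|,|\widehat{\calU}^c|)} \leq \frac{C(\calG_{\calV'})}{c_1}, \qquad c_1 := \tfrac12 \min(|\calU_0|,|\calU_1|),
\end{align*}
and $c_1$ depends only on $|\calU_0|, |\calU_1|$, hence is independent of $\calG_{\calV'}$.

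\emph{Piece (2).} Let $e_i$ be the number of $\calU_i$--$N_i$ edges and $\kappa_i$ the min-cut of the connected graph $\calG_{\calU_i}$ (with $\kappa_i := +\infty$ when $|\calU_i|=1$). Put $\epsilon := \min(e_0, e_1, \kappa_0, \kappa_1) > 0$; like $c_1$, it is determined solely by edges outside $\calG_{\calV'}$. For any non-empty proper subset $\calW \subset \calV'^c$, write $W_i = \calW \cap \calU_i$; a short case check on whether each $W_i$ is empty, equal to $\calU_i$, or a proper non-empty subset shows $|\Gamma_\calG(\calW)| \geq \epsilon$ (the two ``proper subset'' cases contribute at least $\kappa_i$, the two ``all of $\calU_i$'' cases contribute at least $e_i$). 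Hence the Cheeger ratio of $\calW$ is at least $\epsilon/|\calV'^c|$.

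\emph{Piece (3).} Set $c_0 := c_1 \epsilon / |\calV'^c|$. Suppose $C(\calG_{\calV'}) < c_0$ and let $\calU$ realize $h(\calG)$; the inequality $h(\calG) \leq C(\calG_{\calV'})/c_1$ is Piece (1). If $\calU \cap \calV' = \emptyset$ then $\calU \subset \calV'^c$ and Piece (2) would force $h(\calG) \geq \epsilon/|\calV'^c| > C(\calG_{\calV'})/c_1 \geq h(\calG)$, a contradiction; the case $\calU^c \cap \calV' = \emptyset$ reduces to this one by passing to $\calU^c$, which has the same Cheeger ratio. The hard part is the case analysis in Piece (2), together with honestly tracking that every quantity entering $\epsilon$, $c_0$ and $c_1$ really is insensitive to the internal edges of $\calG_{\calV'}$; choosing the ``infinity'' convention for $\kappa_i$ when $|\calU_i|=1$ is the small but necessary trick that keeps the singleton component cases from breaking the bound.
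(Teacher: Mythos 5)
Your proof is correct and takes essentially the same route as the paper's: the paper also obtains the upper bound from a cut through $\calV'$ (it uses $\calN_0=\calU_0$ together with its neighbors $N_0$, whose boundary is itself a cut of $\calG_{\calV'}$, so it gets $c_1=\min(|\calN_0|,|\calN_1|)$ without your factor $2$ from the $|E(\calG_{\calV'})|\le 2\,C(\calG_{\calV'})$ step --- note your $\Gamma(\widehat{\calU})$ is likewise a full cut of $\calG_{\calV'}$, so you could drop that factor too), lower-bounds any Cheeger realizer avoiding $\calV'$ via the $\calU_i$--$\calV'$ edge counts $|\Gamma(\calU_i)|$ and the min-cuts $c(\calG_{\calU_i})$, and then chooses $c_0$ to make the two bounds clash. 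The one substantive difference is in your favor: the paper's Cases 1--3 treat only $\calU=\calU_i$ or $\calU$ a proper subset of a single $\calU_i$, whereas your Piece (2) case check on $W_i=\calW\cap\calU_i$ also covers realizers meeting both components (including $\calW={\calV'}^{c}$ itself), so your version is the more complete one; just state Piece (2) for every non-empty $\calW\subseteq{\calV'}^{c}$ rather than only proper subsets, since Piece (3) needs the case $\calU={\calV'}^{c}$.
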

The constants $c_0$ and $c_1$ are made explicit in the proof (cf.\ Appendix~\ref{sec:pro}). In particular, they are related to the cut size of $\calU_0$ and $\calU_1$, as expected. The appendix contains additional discussions and illustrations. 

\Cref{thm:fcc} essentially says that to create a bottleneck of the graph, we may reduce the cut size of a separating subgraph, i.e., a subgraph that separates the ambient graph into two connected components. This subgraph consists of nodes near class boundaries in our setup. This idea inspires \cref{algo:edu} in the next section.

\section{Utilizing label non-uniformity}

In \cref{sec:uni}, we use hypothetical experiments on \cref{prob:fao} to motivate the study of non-uniformity $w(\cdot)$ in \cref{sec:gta}. Our experiments on \cref{prob:fao} suggest that nodes with large non-uniformity are those that we can classify more accurately. Based on the theoretical insights derived in \cref{sec:gta}, we now propose a model that increases the number of nodes with large non-uniformity in the training set, which leads to a boost in the model performance in testing.

As alluded to in the last paragraph of \cref{sec:wei}, we can achieve our goal by (i) introducing more nodes in the training set with accurate labels (cf.\ \figref{fig:add}(b)), or (ii) creating a bottleneck near class boundaries (cf.\ \figref{fig:add}(c)). Both are associated with the non-uniformity function $w(\cdot)$ in (\ref{eq:wsi}). However, $w(\cdot)$ is used in different ways in these two approaches. In view of the experimental and theoretical findings in the previous sections, for (i), we prioritize nodes with large non-uniformity, while for (ii), we consider nodes with smaller non-uniformity as candidates for boundary nodes. The algorithms are presented in \cref{algo:pab,algo:edu}. 

\begin{algorithm}[!htb] 
\caption{Supplement the training set $\mathfrak{R}$ using $w(\cdot)$} \label{algo:pab}
\begin{enumerate}[(a)]
\item \label{it:pab} Pick a base GNN model $\mathfrak{M}$ (e.g., GCN, GAT) and train the model $\mathfrak{M}$ to obtain the label class probability $\bmu_v(s)$, where $s\in \mathbb{S}$ are the label classes, for each node $v\in \calV$. Compute $w(v) = \sum_{s\in \mathbb{S}} |\bmu_v(s) - \frac{1}{|\mathbb{S}|}|$.
\item\label{it:order} Order the test nodes $v$ in $\mathfrak{T}$ in decreasing order according to $w(v)$.
\item For a hyperparameter $\eta_0$, we form $\calV'$ by taking $\eta_0$ fraction of nodes with the largest $w(\cdot)$ values, i.e., nodes higher in the ordering above.
\item\label{it:R'} Nodes in $\calV'$ are added to the training set $\mathfrak{R}$ with their \emph{predicted test labels} in \ref{it:pab} to form $\mathfrak{R}'$, so that no ground-truth information is leaked.
\item Retrain $\mathfrak{M}$ with $\mathfrak{R}'$
\end{enumerate}
\end{algorithm}

\begin{algorithm}[!htb]
\caption{Edge dropping using $w(\cdot)$} \label{algo:edu}
    \begin{enumerate}[(a)]
    \item Same as \cref{algo:pab}\ref{it:pab} and \ref{it:order}.
    \item For a hyperparameter $\eta_1$, we form $\calV'$ by taking $\eta_1$ fraction of nodes with the smallest $w(\cdot)$ values, i.e., nodes lower in the ordering above.
    \item \label{it:cct} Construct $\calG_{\calV'}$ the induced subgraph of $\calV'$ in $\calG$. Let $\calT_{\calV'}$ be a spanning tree of $\calG_{\calV'}$ and $\calE_{\calV'}^c$ be the edges of $\calG_{\calV'}$ outside $\calT_{\calV'}$. 
    \item For a second hyperparameter $\eta_2$, we randomly remove $\eta_2$ fractions of edges in $\calE_{\calV'}^c$ (cf.\ \figref{fig:drop}). 
    \item\label{it:G'} Let $\calG'$ be the resulting graph on $\calV$ with the following edge sets: (i) those outside $\calG_{\calV'}$, (ii) those in $\calT_{\calV'}$ and (iii) remaining edges in $\calE_{\calV'}^c$ after dropping. 
    \item Retrain $\mathfrak{M}$ on $\calG'$. 
\end{enumerate}
\end{algorithm}

\begin{figure}[!htb]
    \centering
\includegraphics[scale=0.43]{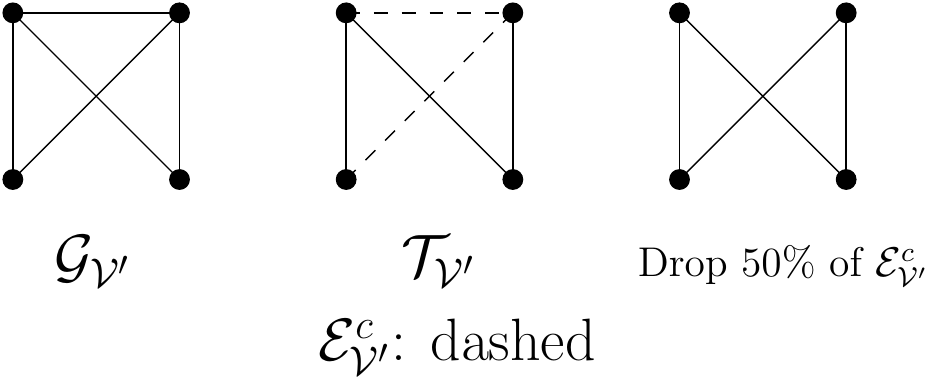}
    \caption{The edge dropping step.}
    \label{fig:drop}
\end{figure}

To give some intuitions, in \cref{algo:pab}, we use $w(\cdot)$ to select nodes with possibly high prediction accuracy. These are then included in the new training set $\frakR'$ with their predicted test labels as the ``ground-truth''. Since the predicted test labels are mostly accurate, the larger training set is expected to lead to better model performance during testing. In \cref{algo:edu}, we use $w(\cdot)$ to identify a set of nodes close in distance to class boundaries. Edge dropping aims to reduce the maximal cut size of this set, which may create a graph bottleneck in view of \cref{sec:gbo}. On the new graph $\calG'$, the base model is expected to learn embeddings that are easier to distinguish among different classes. 

We remark that each algorithm can be applied as a stand-alone module to any chosen base model, or combined together, which we do in our experiments. The combined approach means that we retrain $\mathfrak{M}$, the base model, with $\calG'$ in step \ref{it:G'} of \cref{algo:edu} as the graph and $\mathfrak{R}'$ in step \ref{it:R'} of \cref{algo:pab} as the training set in the last step. An illustration is shown in \figref{fig:model}. 

For \cref{algo:pab}, the final test accuracy is obtained by tallying the accuracy of all the nodes in $\mathfrak{T}$, i.e., the predicted labels of nodes in $\mathfrak{R}'\backslash \mathfrak{R}$ are from the initial step \ref{it:pab}. In \cref{algo:edu}\ref{it:cct}, we propose not dropping any edge from a spanning tree so that the number of connected components remains the same after edge dropping, which also holds true for the combined approach. 

\begin{figure}[!htb]
    \centering
\includegraphics[scale=0.5]{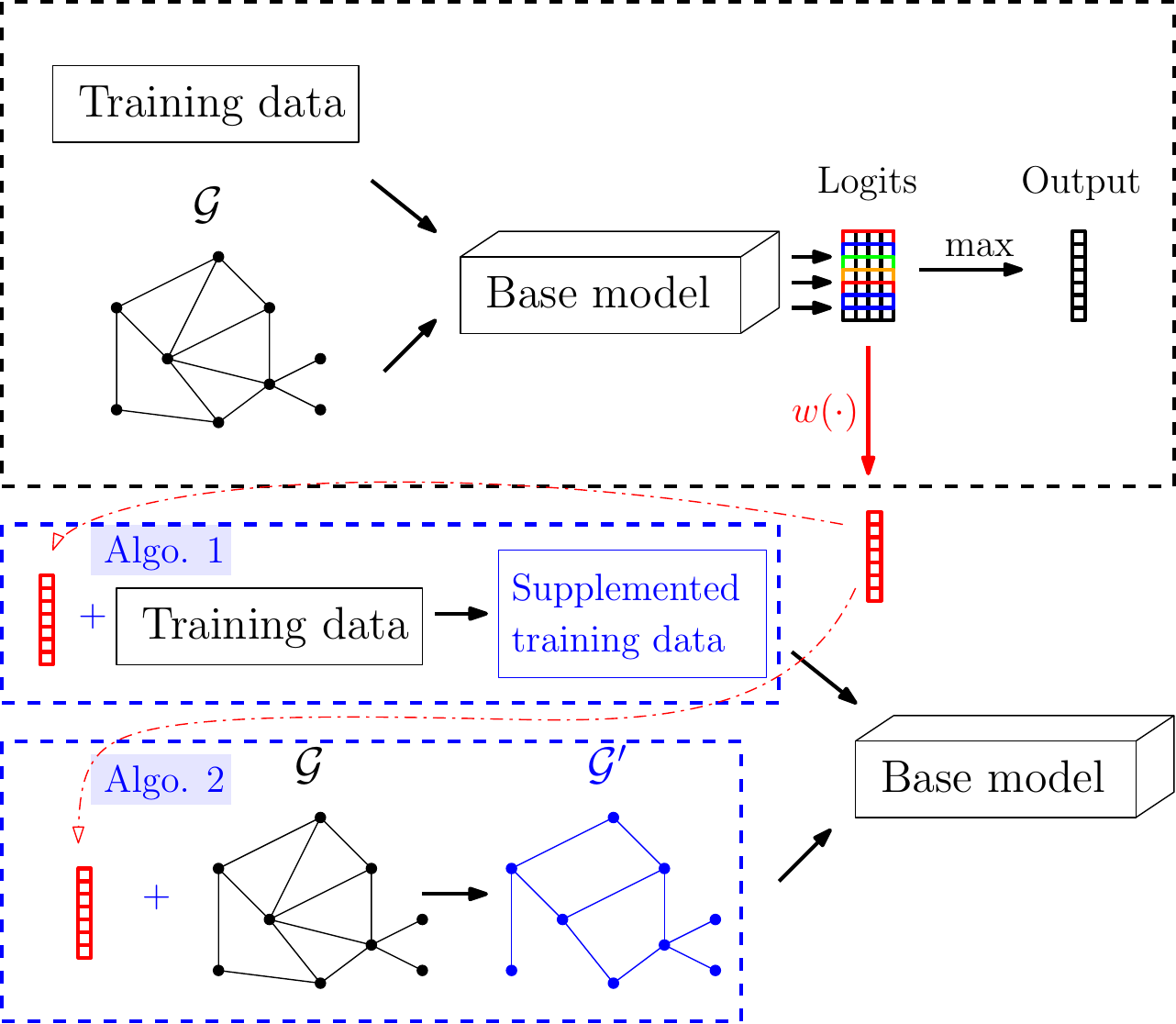}
    \caption{The figure is the scheme of the proposed model. We see that there are two separate modules (in the dashed blue boxes 
    corresponding to the two algorithms. They can be applied either separately or jointly to the base model.
    }
    \label{fig:model}
\end{figure}

\subsubsection*{Related work}
From \cref{sec:gta}, our approach is based on the study of geometric information, more precisely the graph structural information, associated with features and model output. This is not the sole work that emphasizes the importance of geometric knowledge regarding both graph topology and feature embedding. For example, one group of works \citep{Cha19, Gul19, Zhay21} argue that each graph can be associated with a measure of hyperbolicity \citep{Bri99}, and therefore a graph with small hyperbolicity should be studied using hyperbolic geometry \citep{Bac19}. There are also hybrid models combining both hyperbolic and Euclidean geometries \citep{Zhu20}. Some works \citep{Ebl20, Weis21, Lee22} use the concept of simplicial complexes that generalizes graphs to account for higher-order relations among nodes. Our model is different in the sense that our objective is not to find a geometric space best suited for the dataset including both the graph and features. There are also works that tweak the graph topology. For example, DropEdge \citep{Ron20} proposes to randomly drop a fraction of edges in each iteration to alleviate the side-effects of oversmoothing \citet{NT19, Oon20, Che20}. A similar goal is pursued in \citet{Luo21} by filtering out task-specific noisy edges. Our objective, however, is to infer hidden graph structural information associated with label class boundaries. \cref{algo:pab} is similar to self-training \citep{Liq18} by using a part of the test nodes for training (cf.\ \cref{sec:mc}). However, we select test nodes based on the newly introduced $w(\cdot)$, which is different from \citet{Liq18}. Moreover, though \cref{algo:pab} and \cref{algo:edu} are different in nature, they are coherently derived from the same theoretical analysis.    

\section{Experiments}\label{sec:exp}

In this section, we evaluate the proposed model. 

\subsection{Node classification results} \label{sec:cla}

We perform experiments on the node classification problem. The datasets used are Cora, Citeseer, Pubmed, (Amazon) Photo, CS, Airport, and Disease \citep{Sen08, Nam12, Che18, Shc18, Fey19, Cha19}. Our approach requires a base model. We use GCN \citep{Def16}, GAT \citep{Vel18}, DropEdge \citep{Ron20}, GIL \citep{Zhu20}, GraphCON \citep{Rus22}, MaskGAE \citep{Lij22}. 
We call our model the \emph{graph neural network using $w(\cdot)$} ($w$GNN), though the name does not explicitly refer to the base model being used. We single out discussions of heterophilic graphs in \cref{sec:het}. More comparisons are given in \cref{sec:mc}.

There are $3$ hyperparameters: $\eta_0$ in \cref{algo:pab} and $\eta_1,\eta_2$ in \cref{algo:edu}. They are tuned using a grid search (of $0.1$ in stepsize) based on the accuracy of the validation set. We propose two ways to apply \cref{algo:edu}. The more principled way is to apply the same base model for \cref{algo:pab} and \cref{algo:edu}. On the other hand, for any dataset, we can also apply \cref{algo:edu} using a fixed model such as GCN once to generate $\calG'$, which is stored for any other models on the same dataset. This is a compromise that is very efficient. We justify this procedure in \cref{sec:mmb}. Other details regarding datasets and source code are in \cref{app:sta}. The results are shown in \cref{tab:ncresult}. In all cases, our model shows a performance improvement. We perform statistical tests and notice that the $p$-value of $37$ among $42$ (about $88\%$) comparisons is $<0.05$, i.e., the improvement of $w$GNN is significant. 

\begin{table*}
\caption{Node classification result. Performance score averaged over ten runs. The best performance is boldfaced. MaskGAE uses a non-standard split for CS, and the results are to show improvements only and are not used for comparison with other benchmarks.}
\label{tab:ncresult}
\centering
\scalebox{0.9}{
\begin{tabular}{@{}lccccccc@{}}
\toprule
Method          & CS                & Photo             & Cora               & Citeseer           & Pubmed            & Airport            & Disease \\\midrule
GCN             & $88.14 \pm 0.42 $ & $90.66 \pm 0.52 $ & $80.65 \pm 0.49$   & $71.23 \pm 0.66$   & $79.03 \pm 0.38$  & $85.08 \pm 2.02$ &  $87.68 \pm 3.67$\\
\hdashline
$w$GNN          & $89.29 \pm 0.14 $ & $92.35 \pm 0.18 $ & $83.12 \pm 0.31$   & ${\bf 73.95} \pm 0.46$   & $80.48 \pm 0.25$  & $87.77 \pm 1.57$ &  $89.02 \pm 4.33$\\
\hline
GAT             & $88.51 \pm 0.73 $ & $90.36 \pm 0.85 $ & $81.91 \pm 0.48$   & $70.21 \pm 0.52$   & $78.91 \pm 0.42$  & $91.23 \pm 3.40$ &  $83.74 \pm 2.31$\\
\hdashline
$w$GNN          & $89.64 \pm 0.38 $ & $91.82 \pm 0.25 $ & $ 84.84 \pm 0.50$   & $72.85 \pm 0.49$   & $79.59 \pm 0.28$  & ${\bf 93.18} \pm 1.22 $ & $86.42 \pm 1.00$\\
\hline
DropEdge        & $88.27 \pm 0.43 $ & $90.49 \pm 0.64 $ & $81.00 \pm 0.55$   & $70.72 \pm 0.56$   & $79.18 \pm 0.33$  & $86.80 \pm 1.50 $ & $87.72 \pm 2.60$\\
\hdashline
$w$GNN          & $88.92 \pm 0.27 $ & $92.09 \pm 0.24 $ & $83.89 \pm 0.36$   & $73.13 \pm 0.22$   & $80.20 \pm 0.21$  & $87.17 \pm 1.59 $ & ${\bf 92.32} \pm 0.43$ \\
\hline
MaskGAE*        & $92.72 \pm 0.11 $ & $91.55 \pm 0.23 $ & $82.03 \pm 0.76$   & $70.10 \pm 1.37$   & $80.11 \pm 0.51$  & $70.56 \pm 1.25$ & $70.51 \pm 3.57$ \\
\hdashline
$w$GNN          & $95.16 \pm 0.68$ & ${\bf 93.50} \pm 0.28 $ & $82.85 \pm 0.26$   & $72.38 \pm 0.97$   & ${\bf 81.78} \pm 0.22$  &$85.62 \pm 1.00$ & $71.77\pm 4.26$ \\
\hline
GIL             & $88.69 \pm 0.93 $ & $89.60 \pm 1.30 $ & $79.65 \pm 1.38$   & $66.43 \pm 1.56$   & $77.18 \pm 1.00$  & $90.34 \pm 1.29$ & $89.96 \pm 1.02$ \\
\hdashline
$w$GNN          & $91.44 \pm 0.18 $ & $91.88 \pm 0.32 $ & $83.16 \pm 0.53$   & $69.51 \pm 0.45$   & $80.73 \pm 0.34$  &$91.15 \pm 1.05$ & $92.01 \pm 0.27$ \\
\hline
GraphCON        & $90.19 \pm 0.70$ & $90.04\pm 0.54$ & $82.36\pm 0.84$  & $70.80\pm 1.40$  & $79.11\pm 1.78$  & $57.32\pm 1.87$ & $70.32\pm 4.45$ \\
\hdashline
$w$GNN          & ${\bf 93.03} \pm 0.64$ & $93.08\pm 0.37$ & ${\bf 85.23}\pm 0.69$  & $71.40\pm 0.87$  & $79.70\pm 1.07$  & $63.82\pm 1.81$ & $71.73\pm 3.18$ \\
\bottomrule
\end{tabular}}
\end{table*}

\subsection{Heterophilic graphs} \label{sec:het}

Recall that for a dataset, the graph is heterophilic if there are many edges connecting nodes with different label classes. In this subsection, we study the performance of $w$GNN on datasets Texas and Chameleon \citep{Zhu21} with heterophilic graphs. In addition to the base models in \cref{sec:cla}, we also consider ACM-GCN \citep{Lua22}, which is dedicated to addressing graph heterophily. We first show the results in \cref{tab:het}. Similar to \cref{sec:cla}, each $w$GNN is paired with its base model. We see that $w$GNN is able to improve all the base models, including ACM-GCN.

\begin{table*}[h]
\caption{Results for datasets with heterophilic graphs} \label{tab:het}
\centering
\scalebox{0.82}{
\begin{tabular}{ c : c c : c c : c c : c c : c c : c c} 
 \toprule
  &  GCN &  $w$GNN &  GAT &  $w$GNN &  Dropedge &  $w$GNN &  GIL &  $w$GNN &  GraphCON &  $w$GNN & ACM-GCN & $w$GNN\\ 
  \midrule
  Texas & $54.60\atop\pm 6.47$ & $58.11\atop\pm 9.72$ & $52.70\atop\pm 8.38$ & $55.94\atop\pm 9.71$ & $55.95\atop\pm 6.25$ & $58.11\atop\pm 9.97$ & $57.84\atop\pm 5.44$ & $61.89\atop\pm 6.91$ & $80.81\atop\pm 4.12$ & $85.95\atop\pm 4.73$ & $88.38\atop\pm 3.21$ & ${\bf 90.27}\atop\pm 4.05$\\ 
  \midrule
  Chameleon & $63.62\atop\pm 3.14$ & $64.45\atop\pm 2.40$ & $64.25\atop\pm 3.41$ & $66.12\atop\pm 2.58$ & $63.57\atop\pm 1.97$ & $64.61\atop\pm 2.39$ & $64.41\atop\pm 1.96$ & $64.89\atop\pm 1.04$ & $50.77\atop\pm 2.35$ & $55.53\atop\pm 2.26$ & $65.92\atop\pm 2.32$ & ${\bf 80.79} \atop\pm 1.95$ \\ 
 \bottomrule
\end{tabular}}
\end{table*}

We offer possible explanations for why $w$GNN also works for datasets with heterophilic graphs. In the extreme case that every edge connects a pair of nodes of different classes, then we have a $k$-partite graph, where $k$ is the number of label classes. Therefore, a heterophilic graph is almost $k$-partite with very sparse connections within each of the $k$-components. In principle, our approach can also be helpful for heterophilic graphs. For example, \cref{algo:edu} reduces the connections among different components, and message passings in GNNs rely more on connections within each component. This is favorable for a shallow GNN model, where each node aggregates information only from close neighbors in message passing. Therefore, the predictive power of the model depends largely on the small neighborhood of each node. 

The challenge for a heterophilic graph is that each node is likely to receive ``noisy information'' from neighbors belonging to different classes, due to a large number of edges connecting different types of nodes. For $w$GNN, though edge dropping cannot add connections between nodes of the same class, it can reduce connections between nodes of different classes with a high chance (due to heterophily). When this happens, during message passing, each node can potentially receive less ``noisy information'' from nodes of different classes. Consequently, the contribution from nodes of the same class increases. As illustrated in \figref{fig:edgedrop}, $v$ initially aggregates information from one node of class 1 and two nodes from class 2. After edge dropping in $w$GNN, $v$ has only one node from class 2 as a neighbor. Heuristically, contributions (in fraction) from the same class 1 have increased by $50\%$.  

\begin{figure}
    \centering
    \includegraphics[scale=0.28]{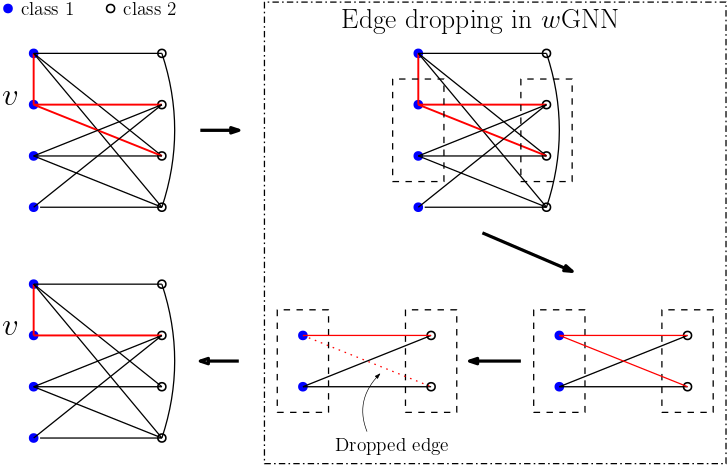}
    \caption{An illustration of the edge dropping step \cref{algo:edu}.} \label{fig:edgedrop}
\end{figure}

In addition, if a base model handles heterophilic edges well, then $w$GNN (in particular \cref{algo:pab}) allows us to have more reliable nodes for each class. In message passing, each node again receives more reliable information. This is possibly the reason why $w$GNN can also improve the specialized model ACM-GCN.

\subsection{Ablation study} \label{sec:as}

Our approach has two submodules \cref{algo:pab} and \cref{algo:edu}, each of which can be applied as a stand-alone algorithm to the base model. In this subsection, we perform the ablation study to demonstrate that both algorithms contribute to the observed performance. For clarity, we use $w$GNN1 (resp.\ $w$GNN2) for the variant that applies \cref{algo:pab} (resp.\ \cref{algo:edu}) only. The Disease dataset is excluded from the study because the graph has a tree structure and \cref{algo:edu} is not involved. The results are shown in \cref{tab:ablation}. Apart from seeing contributions from both algorithms, \cref{algo:pab} appears to have a stronger impact.  

\begin{table*}
\caption{The ablation study: performance score averaged over ten runs. The best performance is boldfaced.}
\label{tab:ablation}
\centering
\scalebox{0.9}{
\begin{tabular}{@{}lcccccc@{}}
\toprule
Method          & CS & Photo & Cora               & Citeseer           & Pubmed             & Airport             \\\midrule
$w$GNN (GCN)    &  $ 89.29 \pm 0.14$ & ${\bf 92.35} \pm 0.18 $ & $ 83.12 \pm 0.31$   & ${\bf 73.95} \pm 0.46$   & ${\bf 80.48} \pm 0.25$   & $ 87.77 \pm 1.57$   \\
\hdashline
$w$GNN1         & $ 89.29 \pm 0.14 $ & $91.93 \pm 0.25 $ & $82.88 \pm 0.41$   & $73.55 \pm 0.46$   & $80.36 \pm 0.39$   & $87.55 \pm 1.89$   \\
\hdashline
$w$GNN2         & $ 88.54 \pm 0.37 $ & $91.23 \pm 0.30 $ & $81.10 \pm 0.25$   & $71.98 \pm 0.52$   & $79.19 \pm 0.37$   & $86.07 \pm 1.52$  \\
\hline
$w$GNN (GAT)    & ${\bf 89.64} \pm 0.38 $ & $91.82 \pm 0.25 $ & ${\bf 84.84} \pm 0.50$   & $72.85 \pm 0.49$   & $79.59 \pm 0.28$   & ${\bf 93.18} \pm 1.22$   \\
\hdashline
$w$GNN1         & $89.54 \pm 0.33 $ &  $91.50 \pm 0.35 $ & $83.99 \pm 0.29$   & $72.35 \pm 0.50$   & $79.59 \pm 0.28$  & $92.11 \pm 0.95$   \\
\hdashline
$w$GNN2         & $88.60 \pm 0.66 $ & $89.97 \pm 0.86 $ & $82.03 \pm 0.63$   & $70.55 \pm 0.62$   & $78.97 \pm 0.32$   &  $91.80 \pm 1.07$  \\
\bottomrule
\end{tabular}}
\end{table*}

\subsection{Further analysis}

In this subsection, we present further analysis of $w$GNN. 

\subsubsection{Choice of parameters} \label{sec:cop}

There are $3$ hyperparameters $\eta_0, \eta_1, \eta_2$ in the model. In this subsection, we study how they impact the model performance. It would be cumbersome to present the results for all possible combinations of $\eta_0,\eta_1,\eta_2$. Instead, we choose a few typical combinations to show the overall pattern. We choose $\eta_1=\eta_2\in [0.1:0.1:0.8]$ so that both light edge drop and heavy edge drop are considered. We let $\eta_0 \in \{0.2,0.4,0.6,0.8\}$. GCN is used as the base model for the study, and heatmaps for the results (Cora and Citeseer datasets) are shown in \figref{fig:para}. We see that the model performance is generally better for larger $\eta_0$, while the performance can drop if $\eta_0$ is large enough. Intuitively, if $\eta_0$ is too large, then \cref{algo:pab} may introduce more errors that may offset any benefits it brings about. On the other hand, we observe that good choices of $\eta_1,\eta_2$ that work for all $\eta_0$ happen at $\approx 0.6, \approx 0.5$ for Cora and Citeseer, respectively. This demonstrates the useful role played by \cref{algo:edu}.
\begin{figure}[h]
    \centering
    \includegraphics[scale=0.3, trim={1.5cm 6.5cm 0.5cm 6.5cm},clip]{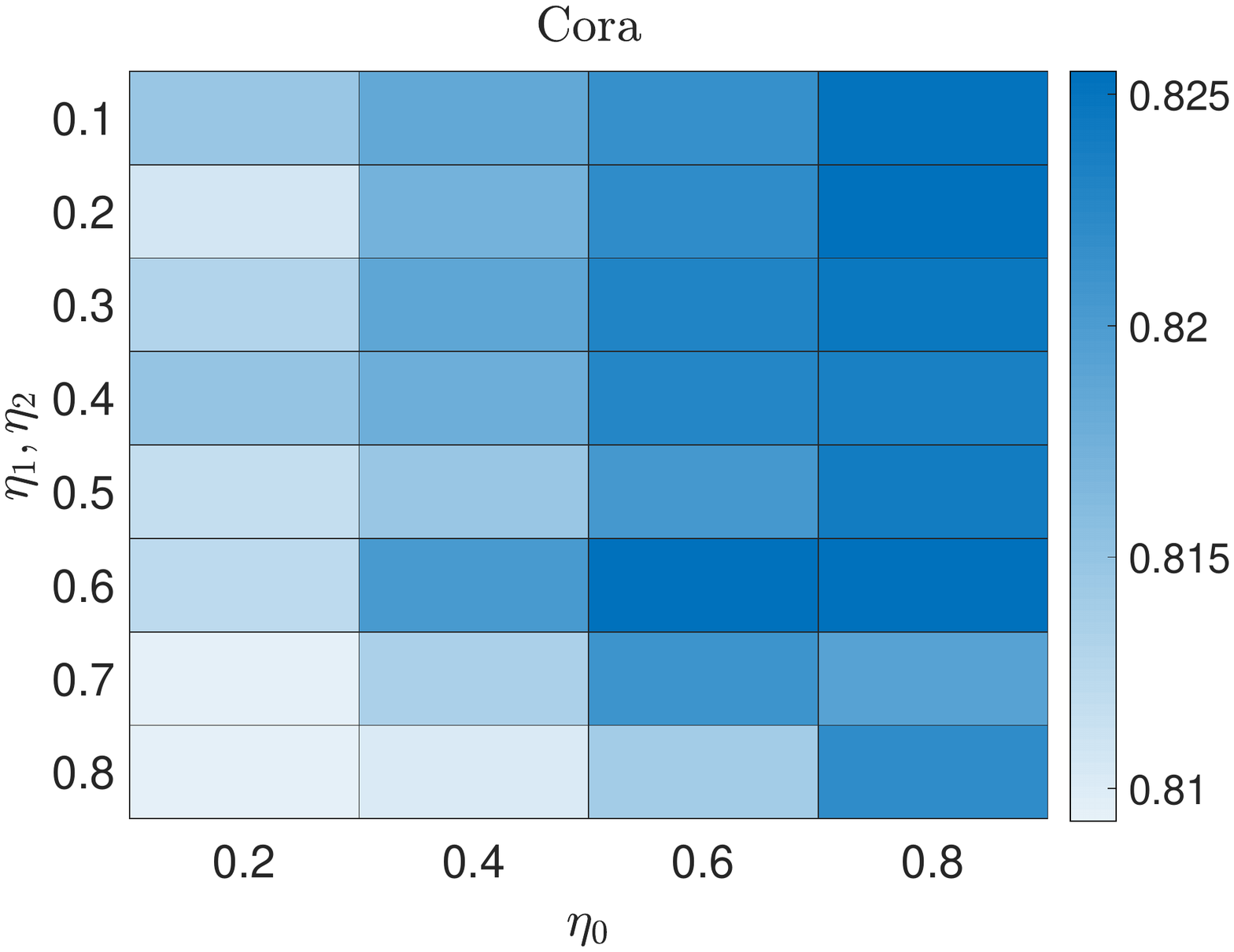}
    \includegraphics[scale=0.3, trim={1.5cm 6.5cm 0.5cm 6.5cm},clip]{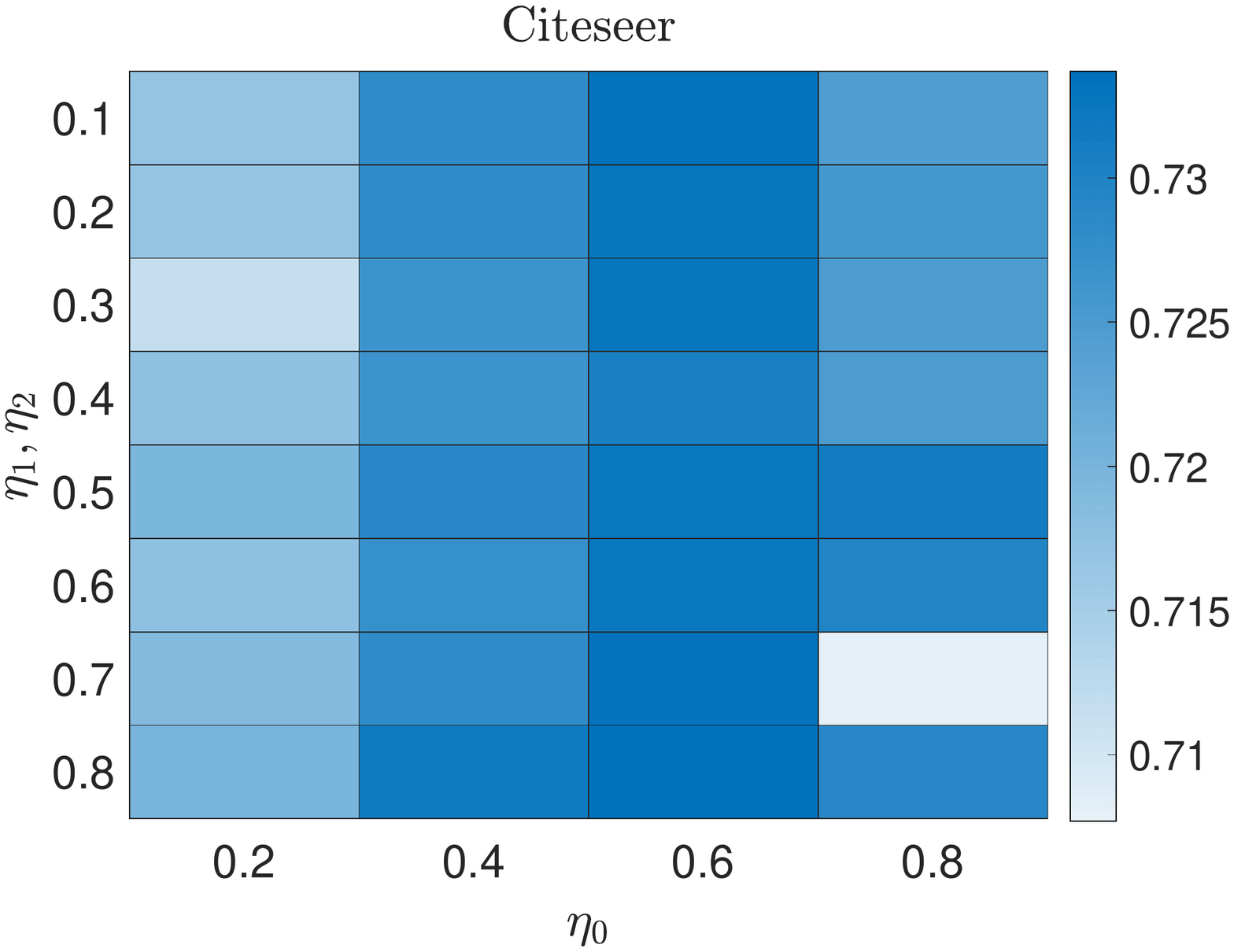}
    \caption{Performance against parameter choices.}
    \label{fig:para}
\end{figure}

\subsubsection{Model mismatch between \cref{algo:pab} and \cref{algo:edu}} \label{sec:mmb}

As discussed in \cref{sec:cla}, an experimental option is to use the same $\calG'$ generated by \cref{algo:edu} from a single model such as GCN for any algorithm on the same dataset. Here, we analyze this by studying model mismatch between \cref{algo:pab} and \cref{algo:edu} using the following experiment.

We use GCN as the base model for Cora and Citeseer datasets. To apply \cref{algo:edu} to generate $\calG'$, we compare using both GCN and GAT, while the latter accounts for the model mismatch. The hyperparameter $\eta_0$ is chosen to be $0.8$ for Cora and $0.6$ for Citeseer for better performance, as we have observed in \figref{fig:para}. The parameters $\eta_1 = \eta_2$ vary from $0.1$ to $0.8$ as in \cref{sec:cop}. 

\begin{figure}[!htb]
    \centering
    \includegraphics[scale=0.33, trim={0 0cm 0 0cm},clip]{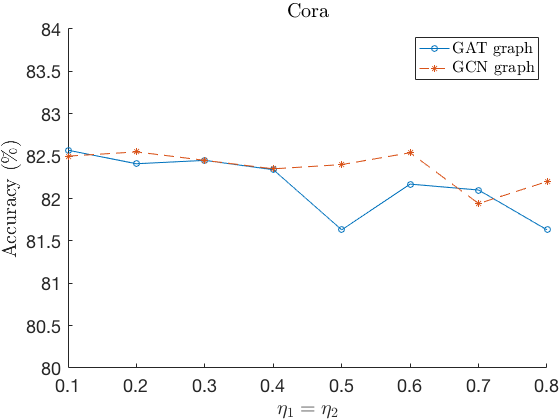}
    \includegraphics[scale=0.33, trim={0 0cm 0 0cm},clip]{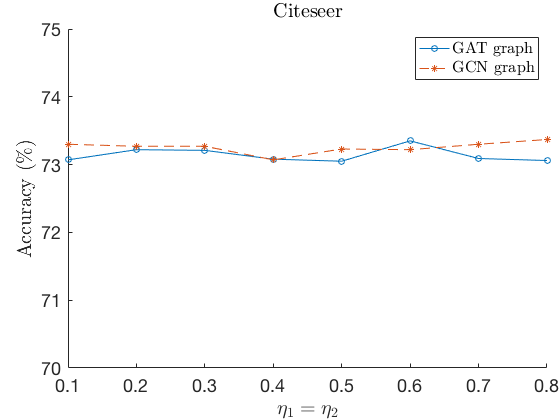}
    \caption{Performance for model mismatch between \cref{algo:pab} and \cref{algo:edu}.}
    \label{fig:gatgraph}
\end{figure}

The results are shown in \figref{fig:gatgraph}. We see that for $\eta_1=\eta_2 \leq 0.4$, when the accuracies are relatively high, the mismatch does not have a large impact on the outcome. For large $\eta_1=\eta_2$, there is an expected difference as heavier edge drop may cause a larger difference in graph topologies. However, for both experiments, the optimal accuracy does not suffer much from using a graph with a mismatched generating base model. Therefore, in practice, we may run \cref{algo:edu} and store the generated graph as an overhead, in resource-constrained applications.   

\subsubsection{Bad base models}
Our approach relies on a base model. In practice, it is possible that a base model has poor performance, particularly at the initial investigation stage of a new dataset. In this subsection, we study the performance of our approach if such a base model is given. The candidates for base models are plain GAT with more layers. It is observed that the performance can be poor if more layers are added without fundamentally tweaking the GAT model structure, due to reasons such as oversmoothing and oversquashing \citep{Top22}. We consider variants of GAT with $L = 2$ to $8$ layers, with deteriorating performance. We choose $\eta_0=0.4, 0.8$ and run the experiments fixing $\eta_1=\eta_2=0$. The results for the Cora dataset are shown in \figref{fig:badmodel}. Unless $L= 8$ when the base model accuracy is very low, our approach can reasonably improve the base model performance. For example, when $L=7$, choosing $\eta_0=0.4$ improves the accuracy of the based model by $\approx 15\%$. Moreover, in this case, $\eta_0=0.4$ is much better than $\eta_0=0.8$, in contrast to $L=2$ when $\eta_0=0.8$ has higher accuracy. This is because when the base model is highly inaccurate, then we are likely supplementing ``datapoints with bad quality'' in \cref{algo:pab} if $\eta_0$ is set to be large. 
This might also be the reason that when $L=8$, applying our approach alone is insufficient to enhance the base model performance to a reasonable level. The findings also suggest $w(\cdot)$ can help with efficiently selecting useful nodes as long as the base model has reasonable performance.   

\begin{figure}[!htb]
    \centering
    \includegraphics[scale=0.33, trim={0 0cm 0 0cm},clip]{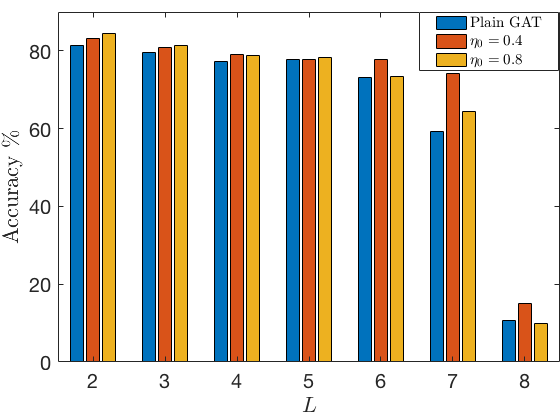}
    \caption{Performance based on GAT with $L=2$ to $8$ layers.} 
    \label{fig:badmodel}
\end{figure}

\section{Conclusion}
In this paper, we study the logits from the intermediate step of a typical GNN model, using a non-uniformity function. We gain hidden graph structural insights and propose a GNN model based on theoretical findings. Our model requires a base GNN model and we demonstrate with experiments to show that our approach can significantly improve the performance of base models in most cases.

\section*{Acknowledgements}
This work is supported by the Singapore Ministry of Education Academic Research Fund Tier 2 grant MOE-T2EP20220-0002, and the National Research Foundation, Singapore and Infocomm Media Development Authority under its Future Communications Research and Development Programme. 

S.\ H.\ Lee is also supported by Shopee Pte.\ Ltd.\ under the Economic Development Board Industrial Postgraduate Programme (EDB IPP). H. Meng and J. Yang are supported in part by the Young Scientists Fund of the National Natural Science Foundation of China (No.62106082). 

The computational work for this article was partially performed on resources of the National Supercomputing Centre, Singapore (\url{https://www.nscc.sg}).
\newpage

\bibliography{ref}
\bibliographystyle{icml2023}

\newpage
\appendix
\onecolumn
\section{List of notations}

For easy reference, we list the most used notations in \cref{tab:lon}.

\begin{table}[!ht]
\caption{List of notations}
\label{tab:lon}
\begin{center}
\scalebox{1}{\begin{tabular}{ c c } 
  \toprule
  Graph, Vertex set, Edge set & $\calG, \calV, \calE$ \\
 \midrule
  Nodes & $v,v'$ \\
 \midrule 
  Training, Test sets & $\mathfrak{R}, \mathfrak{T}$\\
  \midrule 
  Class labels & $s,s'\in \mathbb{S}$ \\
  \midrule
  Distribution at $v$ & $\bmu_v$  \\
  \midrule
  Label non-uniformity & $w(\cdot)$ \\
  \midrule
  Graph signals & $\boldf, \boldf', \tilde{\boldf}$ \\
  \midrule
  Graph Laplacian & $L_{\calG}$ \\
  \midrule
  Subsets of nodes & $\calP, \calO, \calV', \calU$ \\
  \midrule
  Level component of $v$ & $\calC_v$ \\
  \midrule
  Boundary & $\partial$ \\
  \midrule
  Induced subgraph & $\calG_{\calV'}$ \\
  \midrule
  Base GNN model & $\mathfrak{M}$\\
 \bottomrule
\end{tabular}} 
\end{center}
\end{table}

\section{Theoretical results and discussions} \label{sec:pro}

In this appendix, we prove all the results of the paper and present other theoretical findings. We first define the Wasserstein distance \citep{Vil09} and justify (\ref{eq:wsi}).

\begin{Definition} \label{defn:fmm}
For two probability distribution $\mu_1, \mu_2$ with finite second momments on a metric space $\mathbb{M}$, the \emph{$2$-Wasserstein metric} $W(\mu_1,\mu_2)$ between $\mu_1,\mu_2$ is defined by
\begin{align*}
    W(\mu_1,\mu_2)^2 = \inf_{\gamma\in \Gamma(\mu_1,\mu_2)}\int d(x,y)^2 \ud\gamma(x,y),
\end{align*}
where $\Gamma(\mu_1,\mu_2)$ is the set of \emph{couplings} of $\mu_1,\mu_2$, i.e., the collection of probability measures on $\mathbb{M} \times \mathbb{M}$ whose marginals are $\mu_1$ and $\mu_2$, respectively.
\end{Definition}

Suppose $\mathbb{S} = \{s_1,\ldots,s_m\}$ is a finite discrete set and $d$ is the discrete metric on $\mathbb{S}$. For distributions $\mu,\nu$ on $\mathbb{S}$, let $(\mu(s_i))_{1\leq i\leq n}$ and $(\nu(s_i))_{1\leq i\leq n}$ be their respective probability weights. 

\begin{Lemma} \label{lem:wmn}
\begin{align*}
W(\mu,\nu)^2 = \frac{1}{2}\sum_{1\leq i\leq m} |\mu(s_i)-\nu(s_i)|.
\end{align*}
\end{Lemma}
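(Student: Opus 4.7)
The plan is to reduce the infimum defining $W(\mu,\nu)^2$ to a finite-dimensional linear program over couplings on $\mathbb{S}\times\mathbb{S}$, exploit that the squared discrete metric is the off-diagonal indicator, and then identify the optimal coupling via the standard maximal overlap construction.

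First I would rewrite the objective. Because $d(s_i,s_j)^2 = \mathbf{1}\{i\neq j\}$ for the discrete metric, any coupling $\gamma\in\Gamma(\mu,\nu)$ satisfies
\begin{align*}
\int d(x,y)^2\,\mathrm{d}\gamma(x,y) \;=\; \sum_{i\neq j}\gamma(s_i,s_j) \;=\; 1 - \sum_{i=1}^m \gamma(s_i,s_i).
\end{align*}
Thus $W(\mu,\nu)^2 = 1 - \sup_{\gamma\in\Gamma(\mu,\nu)}\sum_{i=1}^m \gamma(s_i,s_i)$, and the problem reduces to maximizing the diagonal mass of $\gamma$.

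Next I would establish the sharp upper bound $\sum_i \gamma(s_i,s_i) \leq \sum_i \min(\mu(s_i),\nu(s_i))$. The marginal constraints $\sum_j \gamma(s_i,s_j) = \mu(s_i)$ and $\sum_j \gamma(s_j,s_i) = \nu(s_i)$ immediately give $\gamma(s_i,s_i)\leq \min(\mu(s_i),\nu(s_i))$ term-by-term. For achievability I would exhibit the maximal-coupling construction: set $\gamma^\ast(s_i,s_i) = \min(\mu(s_i),\nu(s_i))$, then distribute the residual masses $\mu(s_i) - \gamma^\ast(s_i,s_i)$ and $\nu(s_j) - \gamma^\ast(s_j,s_j)$ arbitrarily on the off-diagonal (for instance as a product of the normalized residuals). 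A quick check that $\gamma^\ast$ has the correct marginals and is nonnegative completes the verification that the supremum is attained.

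Finally I would convert the resulting expression to the claimed form. Using the elementary identity $\min(a,b) = \tfrac{1}{2}(a+b-|a-b|)$ and $\sum_i \mu(s_i)=\sum_i \nu(s_i)=1$, I get
\begin{align*}
\sum_{i=1}^m \min(\mu(s_i),\nu(s_i)) \;=\; 1 - \tfrac{1}{2}\sum_{i=1}^m |\mu(s_i)-\nu(s_i)|,
\end{align*}
and substituting back into $W(\mu,\nu)^2 = 1 - \sum_i \min(\mu(s_i),\nu(s_i))$ yields the claim. None of the steps is technically hard; the only point worth care is the construction of a valid optimal coupling with the prescribed diagonal, which I would write out explicitly to avoid hand-waving.
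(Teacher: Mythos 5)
Your proposal is correct and shares the paper's overall strategy---rewrite the squared discrete metric as the off-diagonal indicator so that the transport cost becomes $1$ minus the diagonal mass of the coupling, bound $\gamma(s_i,s_i)\le\min(\mu(s_i),\nu(s_i))$ from the marginal constraints, and then exhibit a coupling attaining that bound---but the attainment step is handled by a genuinely different argument. The paper proves existence of a coupling whose diagonal equals $\min(\mu(s_i),\nu(s_i))$ via an induction on $m$ (a combinatorial claim about nonnegative matrices with prescribed marginals and prescribed diagonal), whereas you invoke the explicit maximal-coupling formula: put $m_i=\min(\mu(s_i),\nu(s_i))$ on the diagonal and spread the residuals $p_i=\mu(s_i)-m_i$ and $q_j=\nu(s_j)-m_j$ as $p_iq_j/\delta$ with $\delta=\sum_i p_i=\sum_j q_j$ (treating the degenerate case $\mu=\nu$, i.e.\ $\delta=0$, separately). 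This is more standard and arguably cleaner than the paper's induction, and your final algebra via $\min(a,b)=\tfrac12(a+b-|a-b|)$ together with $\sum_i\mu(s_i)=\sum_i\nu(s_i)=1$ is tidier than the paper's device of summing the two symmetric inequalities to produce the factor $2$. Two details to spell out when writing it up: the normalized product of residuals places no mass on the diagonal precisely because $p_iq_i=0$ for every $i$ (at least one of the two residuals vanishes), so the diagonal of $\gamma^\ast$ really is $m_i$ and the bound is attained; and ``distribute the residuals arbitrarily on the off-diagonal'' is a little too loose, since the off-diagonal part must have marginals exactly $(p_i)$ and $(q_j)$---the normalized product achieves this, but an arbitrary off-diagonal placement need not.
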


\begin{proof}
This is a known result and we present an elementary self-contained proof here. Let $\gamma = \big(\gamma(s_i,s_j)\big)_{1\leq i,j\leq m}$ be in $\Gamma(\mu,\nu)$. We have
\begin{align} \label{eq:sum}
\begin{split}
    &\sum_{1\leq i\leq m}\sum_{1\leq j\leq m} \gamma(s_i,s_j)d(s_i,s_j)^2 \\
    &= \sum_{1\leq i\leq m}\sum_{1\leq j\neq i\leq m} \gamma(s_i,s_j) \\
    &= \sum_{1\leq i\leq m} \parens*{\sum_{1\leq j\leq m} \gamma(s_i,s_j) - \gamma(s_i,s_i)}  \\
    &= \sum_{1\leq i\leq m} \parens*{\mu(s_i) - \gamma(s_i,s_i)} \\
    &\geq \sum_{1\leq i\leq m} \parens*{\mu(s_i) - \min(\mu(s_i),\nu(s_i)}.
    \end{split}
\end{align}
As $W(\mu,\nu)^2$ is defined by taking the infimum of the left-hand side over all $\gamma \in \Gamma(\mu,\nu)$, we have $W(\mu,\nu)^2\geq \sum_{1\leq i\leq m} \big(\mu(s_i) - \min(\mu(s_i),\nu(s_i)\big)$. By the same argument, we also have $W(\mu,\nu)^2\geq \sum_{1\leq i\leq m} \big(\nu(s_i) - \min(\mu(s_i),\nu(s_i)\big)$. Summing up these two inequalities, we have 
\begin{align*}
2W(\mu,\nu)^2 & \geq \sum_{1\leq i\leq m} \big(\mu(s_i)+\nu(s_i) - 2\min(\mu(s_i),\nu(s_i)\big) \\
& = \sum_{1\leq i\leq m} |\mu(s_i)-\nu(s_i)|.
\end{align*}
Therefore, to prove the lemma, it suffices to show that there is a $\gamma$ such that $\gamma(s_i,s_i) = \min\big(\mu(s_i),\nu(s_i)\big)$. For this, we prove a slightly more general claim: if non-negative numbers $(x_i)_{1\leq i\leq m}$ and $(y_i)_{1\leq i\leq m}$ satisfy $\sum_{1\leq i\leq m}x_i = \sum_{1\leq j\leq m}y_i = a$, then there are non-negative $(z_{i,j})_{1\leq i,j\leq m}$ such that $\sum_{1\leq j\leq m}z_{i,j}=x_i, 1\leq i\leq m$, $\sum_{1\leq i\leq m}z_{i,j}=y_j, 1\leq j\leq m$, and $z_{i,i}=\min(x_i,y_i), 1\leq i\leq m$.

We prove this by induction on $m$. The case for $m=1$ is trivially true by taking $z_{1,1} = x_1 = y_1$. For $m\geq 2$, without loss of generality, we assume that $x_1\geq y_1$ and $x_2\leq y_2$. Then we choose $z_{1,1}=y_1$, $z_{2,2} = x_2$, $z_{1,j}=0, 1< j\leq m$ and $z_{i, 2} = 0, 1\leq i
\neq 2\leq m$. As a result, we form another two sequences of non-negative numbers $x_1-y_1,x_3,\ldots,x_m$ and $y_2-x_2,y_3,\ldots,y_m$ with both summing to $a - x_2 - y_1$. By the induction hypothesis, we are able to find non-negative $(z'_{i,j})_{1\leq i,j\leq m-1}$ for the two new sequences of length $m-1$ each. It suffices to let $z_{i,j}= z'_{i-1,j-1}$ for $i>1$ or $j>2$ and $z_{i,1}=z'_{i-1,1}$ for $i>1$ (illustrated in \cref{fig:cvariable}). This proves the claim and hence the lemma.
\begin{figure}[!htb]
\centering
\includegraphics[width=0.4\columnwidth, trim=0cm 0cm 0cm 0cm,clip]{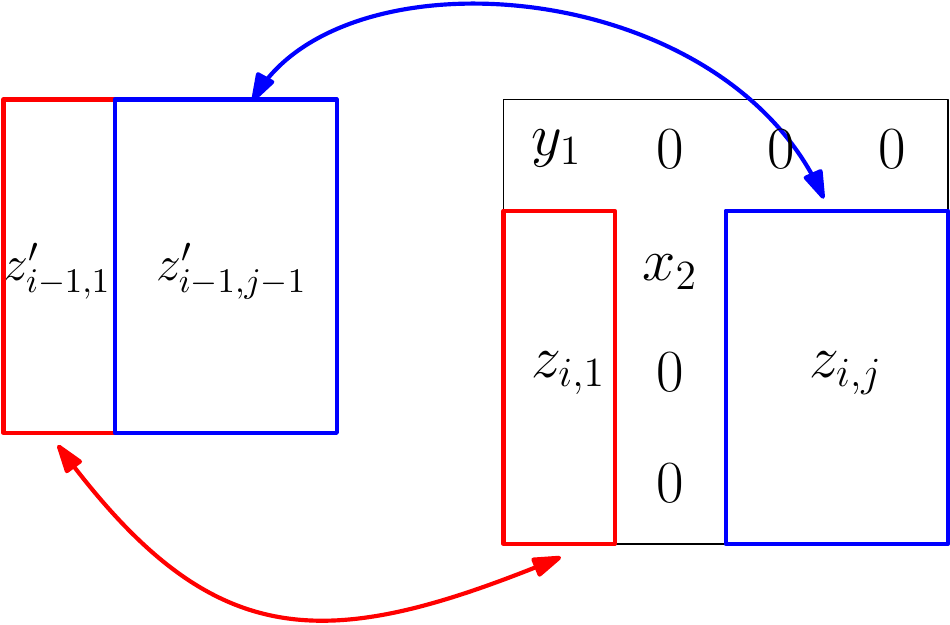}
\caption{The relations between $z_{i,j}$ and $z'_{i,j}$.} \label{fig:cvariable}
\end{figure} 
\end{proof}

Regarding non-uniformity defined by (\ref{eq:wsi}), it suffices to let $\mu$ be $\bmu_v$ and $\nu$ be the uniform distribution on $\mathbb{S}$. Therefore. $w(v)$ is nothing but $2W(\bmu_v,\nu)^2$ for the uniform distribution $\nu$ on $\mathbb{S}$. 

We now move on to the results in \cref{sec:gta} by first proving the averaging property \cref{lem:fev}.

\begin{proof}[Proof of \cref{lem:fev}]
Taking partial derivatives of ${\boldf'}^\top L_{\calG}{\boldf'}$ w.r.t.\ $\boldf'_v$ and setting it to be $0$ yield the stated identity.
\end{proof}

Next, we need the following result to justify using $\boldf$ and $(\bmu_v(s))_{v\in \calV}$ have similar smooth interpolations, in view of the fact that $\boldf$ is an approximation of $(\bmu_v(s))_{v\in \calV}$. 

\begin{Lemma}
If $\calG$ is connected and $\calO$ is non-empty, then smooth interpolation $\boldf \mapsto \widetilde{\boldf}$ as a function $\mathbb{R}^n \to \mathbb{R}^n$ is well-defined and linear.
\end{Lemma}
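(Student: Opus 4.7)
The plan is to reduce the minimization problem to an unconstrained quadratic program over the free coordinates indexed by $\calO^c$, and then show that the relevant Hessian block is invertible (in fact positive definite), which simultaneously yields existence, uniqueness, and linearity.

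First I would partition the coordinates of any $\boldf' \in \bbR^n$ into the pinned block $\boldf'_{\calO}$ and the free block $\boldf'_{\calO^c}$, and write $L_{\calG}$ in the corresponding block form
\begin{align*}
L_{\calG} = \begin{pmatrix} L_{\calO\calO} & L_{\calO\calO^c} \\ L_{\calO^c\calO} & L_{\calO^c\calO^c} \end{pmatrix}.
\end{align*}
The constraint $\boldf'_{\calO} = \boldf_{\calO}$ reduces the objective to a quadratic function of $\boldf'_{\calO^c}$ alone, with Hessian $2 L_{\calO^c\calO^c}$. Setting the gradient to zero gives the first-order condition
\begin{align*}
L_{\calO^c\calO^c}\, \boldf'_{\calO^c} + L_{\calO^c\calO}\, \boldf_{\calO} = \bzero,
\end{align*}
which is just the vector form of the averaging identity in \cref{lem:fev}. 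If $L_{\calO^c\calO^c}$ is invertible, then the unique solution is $\widetilde{\boldf}_{\calO^c} = -L_{\calO^c\calO^c}^{-1} L_{\calO^c\calO}\, \boldf_{\calO}$, and since $\widetilde{\boldf}_{\calO} = \boldf_{\calO}$, the whole map $\boldf \mapsto \widetilde{\boldf}$ is a linear function of $\boldf$.

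The main obstacle is therefore showing that $L_{\calO^c\calO^c}$ is positive definite (hence invertible), which is exactly where the two hypotheses enter. I would argue as follows: take any nonzero $\bx \in \bbR^{|\calO^c|}$ and extend it to $\bar{\bx} \in \bbR^n$ by setting $\bar{\bx}_{\calO} = \bzero$. Then by the block expansion and the vanishing of $\bar{\bx}_{\calO}$,
\begin{align*}
\bx\T L_{\calO^c\calO^c}\, \bx \;=\; \bar{\bx}\T L_{\calG}\, \bar{\bx} \;=\; \sum_{(u,v)\in\calE}\bigl(\bar{\bx}_u - \bar{\bx}_v\bigr)^{2} \;\geq\; 0.
\end{align*}
Equality holds only if $\bar{\bx}$ is constant on every connected component of $\calG$; since $\calG$ is connected, $\bar{\bx}$ must be globally constant, and since $\calO$ is non-empty with $\bar{\bx}_{\calO} = \bzero$, that constant must be $0$, contradicting $\bx \neq \bzero$. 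Hence $L_{\calO^c\calO^c}$ is positive definite.

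Putting the pieces together, the quadratic form in $\boldf'_{\calO^c}$ is strictly convex, so the minimizer exists and is unique, and the explicit formula $\widetilde{\boldf}_{\calO^c} = -L_{\calO^c\calO^c}^{-1} L_{\calO^c\calO}\, \boldf_{\calO}$ together with $\widetilde{\boldf}_{\calO} = \boldf_{\calO}$ exhibits $\boldf \mapsto \widetilde{\boldf}$ as a linear map $\bbR^n \to \bbR^n$. The only delicate ingredient is the positive definiteness step, which is where both the connectedness of $\calG$ and the non-emptiness of $\calO$ are crucial; everything else is routine block-matrix manipulation.
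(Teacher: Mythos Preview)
Your proof is correct and shares the paper's overall structure: both reduce the constrained minimization to the linear system $L_{\calO^c\calO^c}\,\widetilde{\boldf}_{\calO^c} = -L_{\calO^c\calO}\,\boldf_{\calO}$ via first-order conditions, so that well-definedness and linearity follow from invertibility of the principal submatrix $L_{\calO^c\calO^c}$. Where you diverge is in establishing that invertibility. You extend $\bx$ by zeros on $\calO$ and use the identity $\bx\T L_{\calO^c\calO^c}\,\bx = \bar{\bx}\T L_{\calG}\,\bar{\bx} = \sum_{(u,v)\in\calE}(\bar{\bx}_u-\bar{\bx}_v)^2$ to get positive definiteness directly from connectedness plus $\calO\neq\emptyset$. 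The paper instead invokes the eigenvalue interlacing property to reduce to the case $|\calO|=1$, then writes $L_1 = D + L'$ with $L'$ the Laplacian of the induced subgraph on $\calV\setminus\{v\}$ and $D$ diagonal supported on the neighbors of $v$, and argues that $L_1\bx=0$ forces $\bx$ to vanish on those neighbors and hence, by propagating through connected components, everywhere. Your route is more elementary and self-contained (no interlacing, no case reduction), and it directly yields strict convexity of the objective; the paper's route trades that for a more combinatorial picture of how the kernel is killed off one observed node at a time.
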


\begin{proof}
To see this, setting the partial derivatives of ${\boldf'}^\top L_{\calG}{\boldf'}$ to be $0$ yields a linear relation between $\bg$ and $\bh$, where $\bg$ (resp.\ $\bh$) is the subvector of $\widetilde{\boldf}$ corresponds to nodes $\calV\backslash \calO$ (resp.\ $\calO$). The relation takes the form $L_1\bg = L_2\bh$, where $L_1$ is the submatrix of $L_{\calG}$ whose rows and columns are indexed by $\calV\backslash \calO$. It suffices to show that $L_1$ is invertible. As $L_1$ has a strictly smaller size, by the eigenvalue interlacing property \citep{Hwa04}, we only need to show the case for $\calO=\{v\}$ being a singleton. Let $N_v$ be the neighbors of $v\in \calG$. Notice that $L_1 = D + L'$, where $L'$ is the Laplacian of the induced graph $\calG'$ on $\calV\backslash \{v\}$ and $D$ is the diagonal matrix with $1$ at the entries indexed by $N_v$. If $L_1\bx =0$, then $\bx_u = 0$ for $u\in N_v$ and so does $\bx_{v'}$ for any $v'$ in the connected component of $\calG'$ containing $u$. As $\calG$ is connected, this forces $\bx$ to be the zero vector and the result is proved.
\end{proof}

Recall that $\boldf_v$ is observed either $0$ or $1$ at $v\in \calO$. An immediate consequence of the averaging property is the following.

\begin{Corollary}
    For every $v\in \calV$, $0\leq \widetilde{\boldf}_v\leq 1$.
\end{Corollary}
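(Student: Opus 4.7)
The plan is to prove this via a discrete maximum principle argument based on the averaging property stated in \cref{lem:fev}. Assuming the graph $\calG$ is connected and $\calO$ non-empty (as in the preceding lemma), I will show both inequalities in one stroke by arguing that the extrema of $\widetilde{\boldf}$ are attained on $\calO$, where the values lie in $\{0,1\}$.

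For the upper bound, set $M = \max_{v \in \calV}\widetilde{\boldf}_v$ and let $\calS = \{v \in \calV : \widetilde{\boldf}_v = M\}$. Suppose for contradiction that $M > 1$. Since $\widetilde{\boldf}_v \in \{0,1\}$ for $v \in \calO$, the set $\calS$ is disjoint from $\calO$. Pick any $v^* \in \calS$. By \cref{lem:fev},
\begin{align*}
M = \widetilde{\boldf}_{v^*} = \frac{1}{\mathfrak{d}_{v^*}} \sum_{(v^*,v') \in \calE} \widetilde{\boldf}_{v'} \leq \frac{1}{\mathfrak{d}_{v^*}}\sum_{(v^*,v')\in \calE} M = M,
\end{align*}
with equality forcing $\widetilde{\boldf}_{v'} = M$ for every neighbor $v'$ of $v^*$. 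Hence all neighbors of $v^*$ also lie in $\calS$. Iterating this and using connectedness of $\calG$, the set $\calS$ must equal $\calV$, which contradicts $\calS \cap \calO = \emptyset$ (as $\calO$ is non-empty). Thus $M \leq 1$.

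The lower bound is entirely symmetric: with $m = \min_v \widetilde{\boldf}_v$, if $m < 0$ then the set where the minimum is attained avoids $\calO$, and the averaging property again forces the minimum to propagate to all neighbors, contradicting connectedness and non-emptiness of $\calO$. Therefore $m \geq 0$, completing the proof.

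The only delicate point is the connectedness hypothesis, which is inherited from the setting of the preceding lemma (and without which the statement is still true on each connected component that intersects $\calO$, with values on isolated components being unconstrained by the interpolation problem). I expect no real obstacle here — the argument is a standard discrete harmonic maximum principle, and the averaging property does all the work.
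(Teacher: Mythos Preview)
Your proof is correct and follows essentially the same approach as the paper: both argue by contradiction that if the maximum exceeded $1$ (or the minimum fell below $0$), the averaging property of \cref{lem:fev} would force all neighbors of a maximizer to share the maximal value, and connectedness would then propagate this to all of $\calV$, contradicting the boundary values on $\calO$. Your version is slightly more explicit in defining the maximal set $\calS$ and spelling out the inequality chain, but the underlying idea is identical.
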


\begin{proof}
Suppose on the contrary that there is a $v$ such that $\widetilde{\boldf}_v > 1$. Without loss of generality, we assume that $\widetilde{\boldf}_v = \max\{\widetilde{\boldf}_{v'},v'\in \calV\}$. By the averaging property, the signal values of all the neighbors of $v$  are the same as $\widetilde{\boldf}_v$. As $\calG$ is connected, the same holds for the value of $\widetilde{\boldf}$ at every node of $\calG$, which is a contradiction. The same argument shows that $\widetilde{\boldf}_v \geq 0$ for $v\in \calV$.
\end{proof}

Suppose $\calV'$ is a subset of $\calV$. We define the \emph{contraction} $\calG_{/\calV'}$ of $\calG$ w.r.t.\ $\calV'$ as follows. The vertex set of $\calG_{/\calV'}$ is $(\calV\backslash \calV') \cup \{u\}$. For $(v,v') \in \calE$ and $v,v' \in \calV\backslash \calV'$, the edge $(v,v')$ remains in $\calG_{/\calV'}$. For $(v,v')\in \calE$ and $v\notin \calV', v'\in \calV'$, there is the edge $(v,u)$ in $\calG_{/\calV'}$. Intuitively, we have replaced the entire vertex set $V'$ be a single node $u$, and thus called a contraction.

\begin{Lemma} \label{lem:gic}
    $\calG_{/\calV'}$ is connected.
\end{Lemma}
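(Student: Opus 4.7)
The plan is to prove this by lifting paths from $\calG_{/\calV'}$ to $\calG$ (or rather, projecting paths in $\calG$ down to $\calG_{/\calV'}$), exploiting the fact that $\calG$ itself is connected (which is the standing assumption used earlier in the appendix). The claim is essentially structural: a contraction of a connected graph is connected, which is a standard fact that I would nevertheless spell out here in our specific notation.

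I would proceed as follows. First, pick any two vertices $x, y$ of $\calG_{/\calV'}$; I need to exhibit a walk between them. Choose representatives $\tilde{x}, \tilde{y}\in \calV$: namely $\tilde{x}=x$ if $x\in \calV\setminus \calV'$, and otherwise (i.e.\ when $x=u$) choose $\tilde{x}$ to be any vertex of $\calV'$, and similarly for $\tilde{y}$. Since $\calG$ is connected, there exists a walk $\tilde{x} = v_0, v_1, \ldots, v_k = \tilde{y}$ in $\calG$.

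Next I would define the projection $\pi : \calV \to \calV(\calG_{/\calV'})$ by $\pi(v) = v$ if $v \in \calV \setminus \calV'$ and $\pi(v) = u$ if $v \in \calV'$. I claim that for every edge $(v_i, v_{i+1})\in \calE$ of the walk, either $\pi(v_i) = \pi(v_{i+1}) = u$ (when both endpoints lie in $\calV'$), or $(\pi(v_i), \pi(v_{i+1}))$ is an edge of $\calG_{/\calV'}$, directly from the definition of the contraction. Hence the sequence $\pi(v_0), \pi(v_1), \ldots, \pi(v_k)$, after collapsing any consecutive repetitions of $u$, is a walk in $\calG_{/\calV'}$ from $\pi(\tilde{x}) = x$ to $\pi(\tilde{y}) = y$. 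This establishes connectedness.

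There is no real obstacle here; the only thing that needs care is the degenerate edge cases, namely when $\calV' = \emptyset$ (in which case $u$ does not arise and $\calG_{/\calV'} = \calG$, trivially connected) and when $\calV' = \calV$ (in which case $\calG_{/\calV'}$ is the single vertex $\{u\}$, also trivially connected). In the generic case, the argument is purely combinatorial and does not interact with any of the signal-theoretic machinery of the previous lemmas; the lemma is used subsequently only as a structural building block, presumably to invoke the averaging property on the contracted graph.
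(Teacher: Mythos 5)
Your argument is correct and follows essentially the same route as the paper: both proofs take a path in the connected graph $\calG$ between representatives of the two given vertices and contract/project it (your map $\pi$, the paper's $\calP_{/(\calP\cap \calV')}$) to obtain a path in $\calG_{/\calV'}$. Your version merely spells out the projection and the degenerate cases a bit more explicitly.
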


\begin{proof}
    Consider any two distinct nodes $v,v' \in \calG_{/\calV'}$. If $v\neq u, v'\neq u$, then they are connected by a path $\calP$ in $\calG$. In $\calG_{/\calV'}$, they are connected by the path $\calP_{/(\calP\cap V')}$. 

    Otherwise, without loss of generality, we assume $v=u$ and $v'\neq u$. Let $w(\cdot)$ be any node in $\calV'$ and $\calP$ be a path connecting $w(\cdot)$ and $v'$ in $\calG$. Then $\calP_{/(\calP\cap \calV')}$ is a path connecting $v=u$ and $v'$ in $\calG_{/\calV'}$. 
\end{proof}

We can now prove \cref{prop:fev}. 

\begin{proof}
    Let $\calG'$ be the graph obtained from $\calG$ be contracting all the level-components of $\calG$, one at a time. The graph $\calG'$ is connected by \cref{lem:gic}. Denote the vertices of $\calG'$ by $c_v$ where $\calC_v$ is a level-component of $\calG$. It is possible that $c_v = c_{v'}$ as long as $v'\in \calC_v$. In addition, $\calG$ is no longer simple as there can be multiple edges between the same pair of nodes. Each edge in $\calG'$ is associated with a unique edge $(v,v') \in \calE$. Therefore, for convenience, we remain using $(v,v')$ for the edge in $\calG'$.
    
    The signal $\widetilde{\boldf}$ induces a signal $\widetilde{\boldf}'$ on $\calG'$ by $\widetilde{\boldf}'_{c_v} = \widetilde{\boldf}_v$, this is well-defined as $\widetilde{\boldf}$ has the same value over a level-component. Moreover, if $c_v$ and $c_{v'}$ are connected by an edge in $\calG'$, then $\widetilde{\boldf}'_{c_v} \neq \widetilde{\boldf}_v$. We give $\calG'$ an orientation by requiring $(v,v')$ is oriented from $v$ to $v'$ if $\widetilde{\boldf}_v < \widetilde{\boldf}_{v'}$, or equivalently $\widetilde{\boldf'}_{c_v} < \widetilde{\boldf'}_{c_{v'}}$. By the averaging property, if $(v,v')$ is an oriented edge, then there is an oriented edge $(v',u')$ unless $v'\in \calO_1$. Similarly, there is also an oriented edge $(u,v)$ unless $v\in \calO_0$. Pairs of directed edges $(u,v)$ and $(v,w)$ are called \emph{matched}.
    
    For $v\in \calV$, we consider $c_v$ of $\calG'$. We can consecutively find matching edges in both directions until reaching $c_{v_0}, v_0\in \calO_0$ and $c_{v_m}, v_m\in \calO_1$ respectively. Connected the two paths at $c_v$, we obtain $\calP$ passing through $c_v$ such that consecutive edges are matched. By construction, $\calP$ also gives a path (with the same edge labels) that satisfies (a)-(c) for $v$.
\end{proof}

\cref{prop:fev} is used to deduce \cref{coro:fau} as follows.

\begin{proof}
    We assume that $\calG_{\calV_{r_0}}$ is connected and $r \geq r_0$. We prove $\calG_{\calV_{r}}$ is connected by induction on $|\calV_r|-|\calV_{r_0}|$. If $|\calV_r|=|\calV_{r_0}|$, then there is nothing to show. Suppose there is an $r_0\leq r'< r$ such that $|\calV_{r'}|-|\calV_{r_0}| = |\calV_r|-|\calV_{r_0}|-1$. By the induction hypothesis, $\calG_{\calV_{r'}}$ is connected. Let $\calV_r\backslash \calV_{r'}$ be the singleton set containing $v'$. 

    As we assume that $\widetilde{\boldf}_u\neq \widetilde{\boldf}_v$ for $u,v \notin \calO_0\cup \calO_1$, the level-component $\calC_v = \{v\}$ for any $v\notin \calO_0\cup \calO_1$. By \cref{prop:fev}, there is a path $\calP$ with increasing $\widetilde{\boldf}$ value connecting $\calO_0$ and $\calO_1$ that passes through $v'$. The node before $\calP$ must belong to $\calV_{r'}$, and hence $v'$ is connected to $\calG_{\calV_{r'}}$. As a result, $\calG_{\calV_r}$ is connected. 

    The proof regarding connectedness of $\calG_{\calV_{r}^c}$ is identical.
\end{proof}

We proceed to prove \cref{cor:lvb} with the following observation. 

\begin{Lemma} \label{lem:lsb}
Let $\calO'$ be the immediate neighbors of $\calO$ in $V\backslash \calO$. For each $v'\in \calO'$, let $d_{v'}$ be the number of nodes $v\in \calO$ such that $(v,v')\in \calE$. Then we have
\begin{align*}
    \sum_{v'\in \calO'}d_{v'} \widetilde{\boldf}_{v'} = \sum_{v'\in \calO'}\sum_{(v,v')\in \calE,v\in \calO} \boldf_v.
\end{align*}
\end{Lemma}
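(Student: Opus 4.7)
The plan is to derive the identity directly from the averaging property in \cref{lem:fev} via a global summation and a careful edge-by-edge reorganization. Since \cref{lem:fev} states $\mathfrak{d}_v \widetilde{\boldf}_v = \sum_{(v,u)\in\calE}\widetilde{\boldf}_u$ for every $v \notin \calO$, summing this identity over all $v \in \calV \setminus \calO$ produces
\begin{equation*}
\sum_{v \notin \calO} \mathfrak{d}_v \widetilde{\boldf}_v \;=\; \sum_{v \notin \calO}\sum_{(v,u)\in\calE}\widetilde{\boldf}_u.
\end{equation*}
Both sides can be reorganized as sums over oriented edges $(v,u)$ with $v \notin \calO$: the left carries weight $\widetilde{\boldf}_v$ on each such edge, while the right carries weight $\widetilde{\boldf}_u$.

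Next I would classify each undirected edge $\{u,v\}\in\calE$ into three types: (i) both endpoints lie in $\calO$, (ii) exactly one endpoint lies in $\calO$ (its partner then necessarily lies in $\calO'$ by the definition of $\calO'$), and (iii) neither endpoint lies in $\calO$. Type (i) edges contribute nothing to either side of the identity above. For a type (iii) edge $\{u,v\}$, both orientations appear in the outer summation, so the edge contributes $\widetilde{\boldf}_v + \widetilde{\boldf}_u$ to each side, and these contributions cancel term by term. Only type (ii) edges leave a residue: for $\{u,v\}$ with $u \in \calO$ and $v \in \calO'$, only the orientation $(v,u)$ is admissible in the outer sum (since $u\notin\calV\setminus\calO$), so the left picks up $\widetilde{\boldf}_v$ while the right picks up $\widetilde{\boldf}_u = \boldf_u$.

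Equating the surviving contributions gives
\begin{equation*}
\sum_{\substack{\{u,v\}\in\calE\\ u\in\calO,\,v\in\calO'}}\widetilde{\boldf}_v \;=\; \sum_{\substack{\{u,v\}\in\calE\\ u\in\calO,\,v\in\calO'}}\boldf_u.
\end{equation*}
Regrouping the left sum by $v' \in \calO'$ collects each term with multiplicity exactly $d_{v'}$, yielding $\sum_{v'\in\calO'} d_{v'}\widetilde{\boldf}_{v'}$; regrouping the right sum by $v' \in \calO'$ reproduces $\sum_{v'\in\calO'}\sum_{(v,v')\in\calE,\,v\in\calO}\boldf_v$, which is exactly the claimed identity. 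The only step needing care is the type-(iii) cancellation; it is clean because both endpoints of such an edge lie in $\calV\setminus\calO$ and therefore each contributes one instance of the averaging identity, producing the symmetric two-directional matching that forces the cancellation. Everything else is elementary bookkeeping.
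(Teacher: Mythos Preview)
Your proposal is correct and follows essentially the same approach as the paper: sum the averaging identity of \cref{lem:fev} over all $v\notin\calO$, then cancel the contributions from edges with both endpoints outside $\calO$ so that only the edges crossing from $\calO'$ into $\calO$ survive. Your write-up is somewhat more explicit about the edge classification, but the argument is identical in substance.
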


\begin{proof}
    For each $v\in \calV\backslash \calO$, by \cref{lem:fev}, $\sum_{(v,v')\in \calE} \widetilde{\boldf}_v = \sum_{(v,v')\in \calE} \widetilde{\boldf}_{v'}$. If we sum up these identities, we have 
    \begin{align}\label{eq:vvf}
    \sum_{v\notin \calO}\sum_{(v,v')\in \calE} \widetilde{\boldf}_v = \sum_{v\notin \calO}\sum_{(v,v')\in \calE} \widetilde{\boldf}_{v'}.
    \end{align} 
    In (\ref{eq:vvf}), if both $v$ and $v'$ are not in $\calO$ and $(v,v')\in \calE$, then $\widetilde{\boldf}_v$ and $\widetilde{\boldf}_{v'}$ occur in both sides of (\ref{eq:vvf}). Canceling these common terms, we have $\sum_{v'\in \calO'}d_{v'} \widetilde{\boldf}_{v'} = \sum_{v'\in \calO'}\sum_{(v,v')\in \calE,v\in \calO} \boldf_v$.
\end{proof}

The proof does not require that $\calO$ in \cref{lem:lsb} be exactly $\calO_0\cup \calO_1$. The only requirement is that we interpolate $\widetilde{\boldf}$ by requiring that it must agree with observed $\boldf$ at $\calO$, so that the averaging property can be applied. 

\begin{proof}[Proof of \cref{cor:lvb}]
    Let $\calG_0$ (resp. $\calV_1$) be the induced subgraph of $\calG$ with nodes $\partial \calV_1 \cup (V\backslash \calV_1)$ (resp. $\partial \calV_0 \cup (V\backslash \calV_0)$). We apply \cref{lem:lsb} to $\partial \calV_1 \cup \calO_0$ in $\calG_0$ and $\partial \calV_0 \cup \calO_1$ in $\calG_1$ respectively to obtain the following identities:
    \begin{align} \label{eq:o0}
        A(\widetilde{\boldf},\calV_0) + \sum_{(v,v')\in \Gamma(\calO_0),v\in \calO_0}\widetilde{\boldf}_{v'}= A(\widetilde{\boldf},\calV_1), \text{ and }
    \end{align}
   \begin{align} \label{eq:o1}
        A(\widetilde{\boldf},\calV_0) + \sum_{(v,v')\in \Gamma(\calO_1),v\in \calO_1}1= A(\widetilde{\boldf},\calV_1) + \sum_{(v,v')\in\Gamma(\calO_1),v\in \calO_1}\widetilde{\boldf}_{v'}.
    \end{align}
As $0<\widetilde{\boldf}_v<1$ for $v\notin \calO$, (\ref{eq:o0}) yields that $A(\widetilde{\boldf},\calV_0) \leq A(\widetilde{\boldf},\calV_1) - a\Gamma(\calO_0)$, while (\ref{eq:o1}) yields that $A(\widetilde{\boldf},\calV_0) \leq A(\widetilde{\boldf},\calV_1) - b\Gamma(\calO_1)$. By combining the two inequalities, we obtain the desired result. 
\end{proof}

It remains to discuss \cref{thm:fcc}. We first prove the result.

\begin{proof}[Proof of \cref{thm:fcc}]
Let $\calU \subset \calV$ be a subset that realizes $h(\cal{G})$. If $\calU\cap \calV' = \emptyset$ or $\calU^c\cap \calV' = \emptyset$, we have the following possibilities.  

\emph{Case 1}: $\calU = \calU_0$ or $\calU = \calU_1$. For the former, we have \begin{align*}h(\calG) = \frac{|\Gamma(\calU_0)|}{\min(|\calU_0|, |\calU_1|+|\calV'|)} \geq \frac{|\Gamma(\calU_0)|}{|\calU_0|}.\end{align*}
Similarly, if $\calU = \calU_1$, then $h(\calG) \geq |\Gamma(\calU_1)|/|\calU_1|$.

\emph{Case 2}: $\calU$ is a proper subset of $\calU_0$. Let $\overline{\calU}$ be the complement of $\calU$ in $\calU_0$. The set $\Gamma(\calU)$ consists of two parts $\calC_1\neq \emptyset$ and $\calC_2 \subset \Gamma(\calU)$. The set $\calC_1$ is a cut in the induced graph $\calG_{\calU_0}$. Therefore, we may estimate $h(\calG)$ as follows:
\begin{align*}
    h(\calG) = \frac{|\calC_1|+|\calC_2|}{\min(|\calU|, |\overline{\calU}|+|\calU_1|+|\calV'|)} \geq \frac{|\calC_1|}{|\calU|} \geq \frac{c(\calG_{\calU_0})}{|\calU_0|}.
\end{align*}

\emph{Case 3}: $\calU$ is a proper subset of $\calU_0$. Similar to Case 2, we have \begin{align*}h(\calG) \geq \frac{c(\calG_{\calU_1})}{|\calU_1|}.
\end{align*}

On the other hand, let $\calN_0$ (resp.\ $\calN_1$) be union $\calU_0$ (resp.\ $\calU_1$) and its the neighbors contained in $\calV'$. By our assumptions, $\calN_0\cap \calN_1 = \emptyset$. For $\calU = \calN_0$, notice that $\Gamma(\calU)$ is a cut of $\calG_{\calV'}$ and we estimate:
\begin{align*}
    h(\calG) \leq \frac{|\Gamma(\calU)|}{\min(|\calN_0|,|\calN_1|)} \leq \frac{C(\calG_{\calV'})}{\min(|\calN_0|,|\calN_1|)}. 
\end{align*}
Therefore, if $C(\calG_{\calV'})$ satifies 
\begin{align*}
    C(\calG_{\calV'}) < c_0 = \min(|\calN_0|,|\calN_1|)\cdot&\min(\frac{|\Gamma(\calU_0)|}{|\calU_0|}, \frac{|\Gamma(\calU_1)|}{|\calU_1|}, \frac{c(\calG_{\calU_0})}{|\calU_0|}, \frac{c(\calG_{\calU_1})}{|\calU_1|}) \\ &(\text{ remark: $c_0$ is independent of $\calG_{\calV'}$ }),
\end{align*}
then Case 1 - Case 3 are impossible and we must have both $\calU\cap \calV' \neq \emptyset$ and $\calU^c\cap \calV' \neq \emptyset$. In this case, we have seen that $h(\calG) \leq  C(\calG_{\calV'})/c_1$, with $c_1=\min(|\calN_0|,|\calN_1|)$.
\end{proof}

We illustrate \cref{thm:fcc} with \figref{fig:bn}. To create a bottleneck for $\calG$ depicted in (a), we may create a bottleneck in $\calV'$ according to the theorem as in (b). The conditions of the theorem may not always be satisfied. For example in (c), ${\calV'}^c$ can be connected. In this case, if $\calV'$ is large enough, then including a small number of additional nodes makes the conditions hold. It is also possible that ${\calV'}^c$ has more than two components as in (d), we can then apply \cref{thm:fcc} repeatedly each time dealing with two components. 

\begin{figure}
    \centering
\includegraphics[scale=0.8]{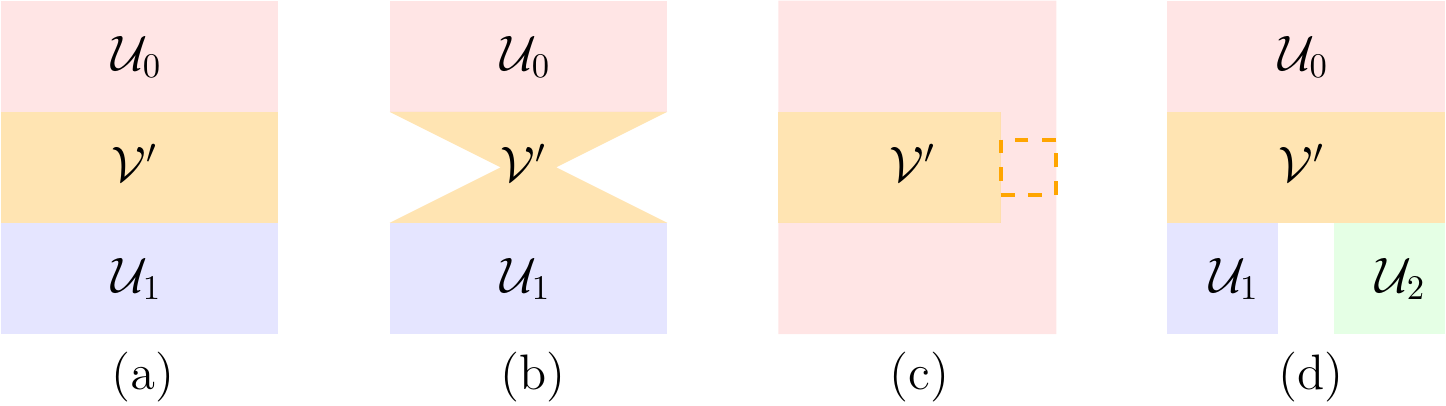}
    \caption{Illustration of \cref{thm:fcc} with Venn diagrams.}
    \label{fig:bn}
\end{figure}

\section{Dataset information and source code} \label{app:sta}
In \cref{tab:ds}, we provide statistics of datasets used in the paper. For data splitting, we follow the cited references. More specifically, Cora, Citeseer, Pubmed, CS, and Photo use 20 training examples per class, with 500 validation samples and 1000 test samples. Disease and Airport use random 30/10/60, 70/15/15 splits respectively. Both Texas and Chameleon use 48/32/20 split. 

\begin{table}[!ht]
\caption{Dataset statistics}
\label{tab:ds}
\begin{center}
\scalebox{1}{\begin{tabular}{  c c c c c } 
  \toprule
  Dataset & Nodes & Edges & Classes & Features \\
 \midrule
  Cora & 2708 & 5429 & 7 & 1433 \\
 \midrule 
  Citeseer & 3327 & 4732 & 6 & 3703 \\
  \midrule 
  Pubmed & 19717 & 44338 & 3 & 500 \\
  \midrule
  Photo & 7487 & 119043 & 8 & 745 \\
  \midrule
  CS & 18333 & 81894 & 15 & 6805 \\
  \midrule
  Disease & 1044 & 1043 & 2 & 1000 \\
  \midrule
  Airport & 3188 & 18631 & 4 & 4 \\
  \midrule
  Texas & 183 & 309 & 5 & 1703 \\
  \midrule
  Chameleon & 2277 & 36101 & 4 & 2325 \\
 \bottomrule
\end{tabular}} 
\end{center}
\end{table}

In \url{http://github.com/amblee0306/label-non-uniformity-gnn}, we provide the source code and instructions to use the code.

\section{More comparisons} \label{sec:mc}
In this appendix, we make direct comparisons with a few more benchmarks: Renode \citep{Che21}, Self-train \citep{Liq18}, and PTDNet \citep{Luo21}. As the implementations of Renode and PTDNet provided by the respective authors are both based on GCN, we also use the {\bf GCN version} of $w$GNN. Therefore unlike \cref{sec:cla}, $w$GNN and its variants in this appendix are not based on the models they compare with. 

Self-training shares some common features with \cref{algo:pab} by introducing test nodes to the training set, we compare it with $w$GNN1 (cf.\ \cref{sec:as}) that does not use \cref{algo:edu}. Similarly, PTDNet involves edge dropping (for a different purpose), and we compare it with $w$GNN2 without using \cref{algo:edu}. As \cref{algo:edu} does not play a role if the graph is a tree or very sparse, the comparisons between $w$GNN2 and PTDNet do not consider Diseases and Texas datasets.

Comparison results are shown in \cref{tab:mc}. We see that $w$GNN (resp.\ $w$GNN1) has an overall better performance than Renode (resp.\ Self-training). On the other hand, each of $w$GNN2 and PTDNet has its own advantages over certain datasets. For the densest datasets Airport, Chameleon, and Photo, $w$GNN2 has much better performance (each with $>10\%$ improvement). For sparser graphs, \cref{algo:edu} does not permit dropping too many edges as we still need to maintain a spanning tree among nodes selected by $w(\cdot)$, and hence it is less impactful. 

\begin{table}[h]
\caption{Comparisons with Renode, Self-train and PTDNet} \label{tab:mc}
\centering
\scalebox{0.75}{
\begin{tabular}{ c c c c c c c c c c c } 
 \toprule
  &  Cora &  Citeseer &  Pubmed &  CS &  Photo &  Airport &  Disease &  Texas &  Chameleon \\ 
  \midrule
  $w$GNN & $83.12 \pm 0.31$ & $73.95 \pm 0.46$ & $80.48 \pm 0.25$ & $89.29 \pm 0.14$ & $92.35 \pm 0.18$ & $87.77 \pm 1.57$ & $89.02 \pm 4.33$ & $58.11 \pm 9.72$ & $64.45 \pm 2.40$ \\ 
 \hdashline
  Renode & $81.28 \pm 0.75$ & $69.54 \pm 0.79$ & $79.62 \pm 0.38$ & $90.08 \pm 0.56$ & $89.25 \pm 1.20$ & $76.40 \pm 3.73$ & $83.35 \pm 1.26$ & $58.11 \pm 4.72$ & $52.98 \pm 2.84$ \\ 
    \midrule
  $w$GNN1 & $82.88 \pm 0.41$ & $73.55 \pm 0.46$ & $80.36 \pm 0.39$ & $89.29 \pm 0.14$ & $91.93 \pm 0.25$ & $87.55 \pm 1.89$ & $89.02 \pm 4.33$ & $55.41 \pm 7.35$ & $63.95 \pm 1.94$ \\ 
 \hdashline
Self-training & $82.27 \pm 0.33$ & $73.24 \pm 0.44$ & $80.32 \pm 0.18$ & $88.92 \pm 0.19$ & $90.62 \pm 0.43$ & $85.69 \pm 0.89$ & $87.18 \pm 1.18$ & $54.14 \pm 7.45$ & $63.97 \pm 2.33$ \\ 
\midrule
  $w$GNN2 & $81.10 \pm 0.25$ & $71.98 \pm 0.52$ & $79.19 \pm 0.37$ & $88.54 \pm 0.37$ & $91.23 \pm 0.30$ & $86.07 \pm 1.52$ & $-$ & $-$ & $64.28 \pm 1.94$ \\ 
 \hdashline
  PTDNet & $82.80 \pm 2.60$ & $72.70 \pm 1.80$ & $79.80 \pm 2.40$ & $90.37 \pm 0.17$ & $80.11 \pm 0.44$ & $64.28 \pm 1.94$ & $-$ & $-$ & $50.35 \pm 1.93$ \\ 
 \bottomrule
\end{tabular}}
\end{table}


\end{document}
